\DeclareMathOperator*{\argmin}{argmin}
\newtheorem{theorem}{Theorem}
\newtheorem{lemma}[theorem]{Lemma}
\newtheorem{remark}{Remark}
\newcommand{\PHB}[1]{\noindent\textbf{#1}\hspace{.5em}} 
\newcommand{\PHM}[1]{\vspace{.2em}\noindent\textbf{#1}\hspace{.5em}} 
\newcommand{\A}{\bm{A}}
\newcommand{\D}{\bm{D}}
\newcommand{\I}{\bm{I}}
\renewcommand{\P}{\bm{P}}
\newcommand{\Q}{\bm{Q}}
\newcommand{\R}{\bm{R}}
\renewcommand{\S}{\bm{S}}
\newcommand{\U}{\bm{U}}
\newcommand{\X}{\bm{X}}
\newcommand{\W}{\bm{W}}
\newcommand{\Z}{\bm{Z}}
\renewcommand{\b}{\bm{b}}
\newcommand{\e}{\bm{e}}
\newcommand{\g}{\bm{g}}
\newcommand{\p}{\bm{p}}
\renewcommand{\u}{\bm{u}}
\renewcommand{\v}{\bm{v}}
\newcommand{\w}{\bm{w}}
\newcommand{\x}{\bm{x}}
\newcommand{\z}{\bm{z}}
\newcommand{\peng}[1]{#1}
\def\BibTeX{{\rm B\kern-.05em{\sc i\kern-.025em b}\kern-.08em
    T\kern-.1667em\lower.7ex\hbox{E}\kern-.125emX}}
\begin{document}

\title{Feature Reconstruction Attacks and Countermeasures of DNN training in Vertical Federated Learning 
}

\author{\IEEEauthorblockN{Peng Ye\IEEEauthorrefmark{1}, Zhifeng Jiang\IEEEauthorrefmark{1}, Wei Wang\IEEEauthorrefmark{1}, Bo Li\IEEEauthorrefmark{1}, Baochun Li\IEEEauthorrefmark{2}}
\IEEEauthorblockA{
\IEEEauthorrefmark{1}\textit{The Hong Kong University of Science and Technology} \\\IEEEauthorrefmark{2}\textit{University of Toronto}\\
\{pyeac, zjiangaj, weiwa, bli\}@cse.ust.hk, bli@ece.toronto.edu}

}

\maketitle

\begin{abstract}
Federated learning (FL) has increasingly been deployed, in its \emph{vertical form}, among organizations to facilitate secure collaborative training over siloed data. In vertical FL (VFL), participants hold disjoint features of the same set of sample instances. Among them, only one has labels. This participant, known as the \emph{active party}, initiates the training and interacts with the other participants, known as the \emph{passive parties}. Despite the increasing adoption of VFL, it remains largely unknown if and how the active party can extract feature data from the passive party, especially when training deep neural network (DNN) models.

This paper makes the first attempt to study the feature security problem of DNN training in VFL. We consider a DNN model partitioned between active and passive parties, where the latter only holds a subset of the input layer and exhibits some categorical features of \emph{binary values}. \peng{Using a reduction from the Exact Cover problem, we prove that reconstructing those binary features is NP-hard.} Through analysis, we demonstrate that, unless the feature dimension is exceedingly large, it remains feasible, both theoretically and practically, to launch a reconstruction attack with an efficient search-based algorithm that prevails over current feature protection techniques. To address this problem, we develop a novel feature protection scheme against the reconstruction attack that effectively misleads the search to some pre-specified random values. With an extensive set of experiments, we show that our protection scheme sustains the feature reconstruction attack in various VFL applications at no expense of accuracy loss.
\end{abstract}


\section{Introduction}
\label{sec:intro}
The sustained technological advances in machine learning (ML) have transformed many industries in a profound way. Companies in the internet, finance, retail, and healthcare industries are now building advanced ML models to enable new AI-driven applications, service models, and intelligent decision making. They require collecting a large volume of training data from diverse sources, which many find infeasible. In reality, data are usually dispersed in siloed organizations and data sharing is strictly forbidden -- it could not only raise serious privacy and security concerns, but also violate government regulations, such as CCPA~\cite{CCPA} in America, GDPR~\cite {GDPR} in Europe, and PIPEDA~\cite{PIPEDA} in Canada. Thus, ensuring data privacy is of paramount importance.

Federated learning (FL) has emerged as a new private-preserving learning paradigm to break data silos~\cite{yang2019Federated,kairouz2019advances,mcmahan2017communication}. It enables multiple parties to collaboratively train a global ML model over siloed data while preserving data privacy. FL has been increasingly deployed among companies to form a data federation. In this paper, we consider a typical application scenario referred to as \emph {vertical federated learning} (VFL)~\cite {weng2021Privacy,kairouz2019advances,fu2022Label,yang2019Federated,jiang2022Comprehensive}, in which participants own disjoint features (attributes) of the same set of sample instances, as illustrated in Figure~\ref{fig:vfl}. Only one participant has \emph{labels}, known as the \emph{active party}, and utilizes the joint feature data of its own and from the others, known as the \emph{passive parties}, to train an ML model. For example, a social network company and an online retailer can have an overlapping user base. The former has accumulated a rich set of user profiles (feature A) through its social network app, while the latter has user browsing history (feature B) and item ordering records (labels). Together, they form a joint dataset with user features vertically partitioned between the two types of participants. The online retailer, being the active party, can partner with the social network company to train a better recommendation model over the joint dataset.

\begin{figure}
  \centering
  \includegraphics[width=\columnwidth]{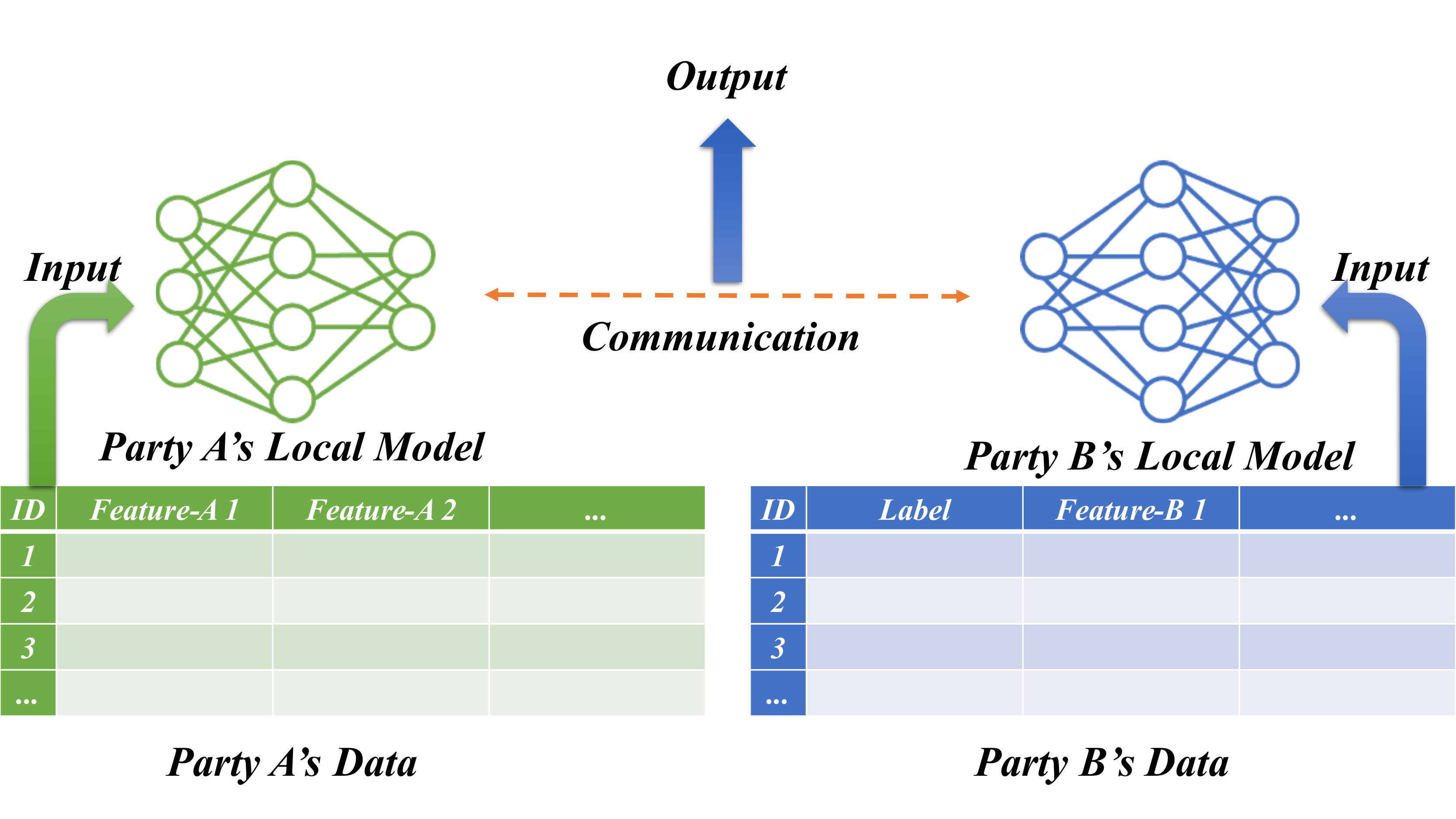}\\
  \caption{Illustration of VFL.}\label{fig:vfl}
\end{figure}

Among various models used in VFL, deep neural networks (DNN) seem to be the most promising in practice. Depending on how features are partitioned, a DNN model is split between different participants, where the passive party holds a subset of a few bottom layers, and the active party holds the rest of the neural network. Each model partition is maintained as a \emph{private local network}. The active party initiates the training and iteratively interacts with the passive party.

Clearly, raw data is not exposed in the training process; yet the intermediate results exchanged between the two parties contain a rich set of information that may reveal private information. Prior works show that private labels owned by an active party can be possibly inferred by a passive party from the received gradient updates, resulting in \emph{label leakage}~\cite{fu2022Label,li2022Label,liu2022Batch}. In this paper, we study a dual security problem concerning the \emph{feature reconstruction attack}, where an active party attempts to uncover the private features owned by a passive party. Evidently, the active party is in a more advantageous position for a feature attack with more information to exploit. However, only a few recent works have considered the feature security problem in VFL for logistic regression and decision tree models, and under rather strong assumptions. For instance, It is assumed that the adversary either has auxiliary feature data~\cite{jiang2022Comprehensive}, knows the weights of the entire model~\cite{luo2021Feature}, or can corrupt the trusted third-party~\cite{weng2021Privacy}.

To the best of our knowledge, this paper makes the first attempt to study the feature reconstruction attack of DNN training in VFL. We consider a DNN model jointly trained by two participants, where the passive party holds a subset of the input layer and the active party holds the remainder of the model. This design has a number of benefits: (1) it requires no structural adaptation of a DNN model while achieving the same accuracy as centralized training; (2) it exposes minimum attack surface for label inference~\cite{fu2022Label}; (3) it supports the state-of-the-art privacy-preserving framework~\cite{fu2022BlindFL}. We assume that the active party is an honest-but-curious adversary with no additional knowledge beyond its own data (e.g., features and labels), local models, and the intermediate results received from the passive party. We first show through analysis an \emph{impossibility theorem} (Theorem~\ref{thm:impossibility}) in that the active party cannot reconstruct \emph{general features} from the passive party that can take arbitrary values with the above knowledge only. This seems to partially explain that there have been limited explorations on feature reconstruction attacks, and each requiring some strong assumptions.

As reconstructing general features is infeasible, in this paper, we consider the attack on categorical features of \emph{binary values}\footnote{Our attack is also effective to general categorical features that take known values (Section~\ref{sec:discuss}).}, which are commonly observed in training data containing sensitive information (e.g., gender, marital and employment status). We show that the problem of binary feature reconstruction can be reduced from the Exact Cover problem (Theorem~\ref{thm:nphardness}), which is NP-hard~\cite{karp1972reducibility}. Through rigorous analysis, we show that, unless the feature dimension is exceedingly large, it remains feasible, theoretically and empirically, to reconstruct binary features with a robust search-based attack algorithm (Section~\ref{sec:attack}). We further demonstrate that such an attack cannot be effectively defended by conventional random masking approaches, nor can it be guarded by the recently proposed privacy-preserving framework for VFL training~\cite{fu2022BlindFL}.

A common defense approach is to add a mask with Gaussian noise to the intermediate output. However, as the results from the experiments will suggest, such an approach often leads to significant degradation in model accuracy. To address this problem, we propose an efficient feature protection scheme, in which we first perturb the intermediate output via a \emph{rank-reduction} technique with negligible impact on model performance. Then we insert a fabricated (randomly generated) binary feature to masquerade as the input features. We show that this can effectively lead the attacker to find the fabricated feature instead of the original input features.

We have evaluated our attack and defense methods over five datasets. Our experimental results have shown that our attack strategies can completely recover all input binary features if no protection mechanism is in place. When the intermediate results are masked by random Gaussian noise, the attack is still effective with high accuracy, but the model performance suffers when the noise is large. Our proposed defense, however, successfully misleads the attacker to a fabricated binary feature and results in nearly no loss in model performance.

\section{Background and Motivation}
\label{sec:background}
In this section, we present the background of VFL with vertically partitioned data and DNN training in VFL.

\subsection{Vertical Federated Learning}
\label{sec:vfl}
With the increasing concern on data privacy, federated learning (FL)~\cite {mcmahan2017communication,yang2019Federated,kairouz2019advances} emerges as a new paradigm for secure collaborative learning over siloed data. In FL, participants jointly build a training model without revealing the private data. Depending on how data is partitioned and applications, FL can be categorized into, \emph{vertical} FL and \emph{horizontal} FL. In this paper, we focus on vertical FL (VFL), in which participants have overlapping sample instances but own their disjoint features. In other words, participants have vertically partitioned tabular data where a sample instance is a row and a feature is a column (see Figure~\ref{fig:vfl}). Among all participants, only one participant has labels for a specific learning task, which can obtain better training performance by incorporating all features from other participants. This participant initiates VFL training, thus being the \emph{active party}, and interacts with others, i.e., the \emph{passive party}, to jointly build a model over enriched feature data.

VFL has found a wide range of applications in cross-enterprise collaboration~\cite {yang2019Federated,fu2022BlindFL,fu2022Label,luo2021Feature,weng2021Privacy}. Considering the example described earlier, the user profiles gathered by the social network company can benefit not only the business of online retailers, but also many other relevant businesses. For example, it can help a FinTech corporation build a better risk model, or a restaurant business to establish a more accurate model capturing customer dining preferences. Each of these companies can thus initiate VFL training and engages the social network company to collaborate as a passive party, similar to that depicted in Figure~\ref{fig:vfl}.

\subsection{DNN training in VFL}
VFL supports a variety of training models, ranging from regression~\cite{hardy2017private}, decision trees~\cite{cheng2021secureboost,luo2021Feature}, and to more sophisticated deep neural networks (DNN)~\cite{fu2022BlindFL,vepakomma2018split}. Among them, DNN seems to be most promising in that it achieves state-of-the-art performance in a myriad of real applications. DNN training in VFL borrows the idea from \emph{split learning}~\cite{vepakomma2018split}, in which a neural network is split into a \emph{top model} and multiple \emph{bottom models} at a certain layer called the \emph{cut layer}. All participants hold a bottom model that takes inputs of its own features; only the active party additionally holds the top model. All split models are maintained as \emph{private local models}.

In VFL training, the active party iteratively interacts with the passive party using the standard stochastic gradient descendent (SGD) algorithm, as illustrated in Figure~\ref{fig:workflow}. In the forward pass of an iteration, each participant (active and passive) computes the forward activation of its bottom model using its own features, and passes the result to the active party for aggregation. In the meanwhile, the active party also feeds the aggregated results to the top model and obtains the prediction output of the entire neural network. In the backward pass, the active party computes the gradients based on the prediction output and its own labels and sends the passive party the expected gradients w.r.t. the cut layer. The passive party continues the backward pass and computes the gradients w.r.t. the bottom model. All participants can now update the local models via gradient descent and proceed to the next iteration.

\begin{figure}
 \centering
 \includegraphics[width=\columnwidth]{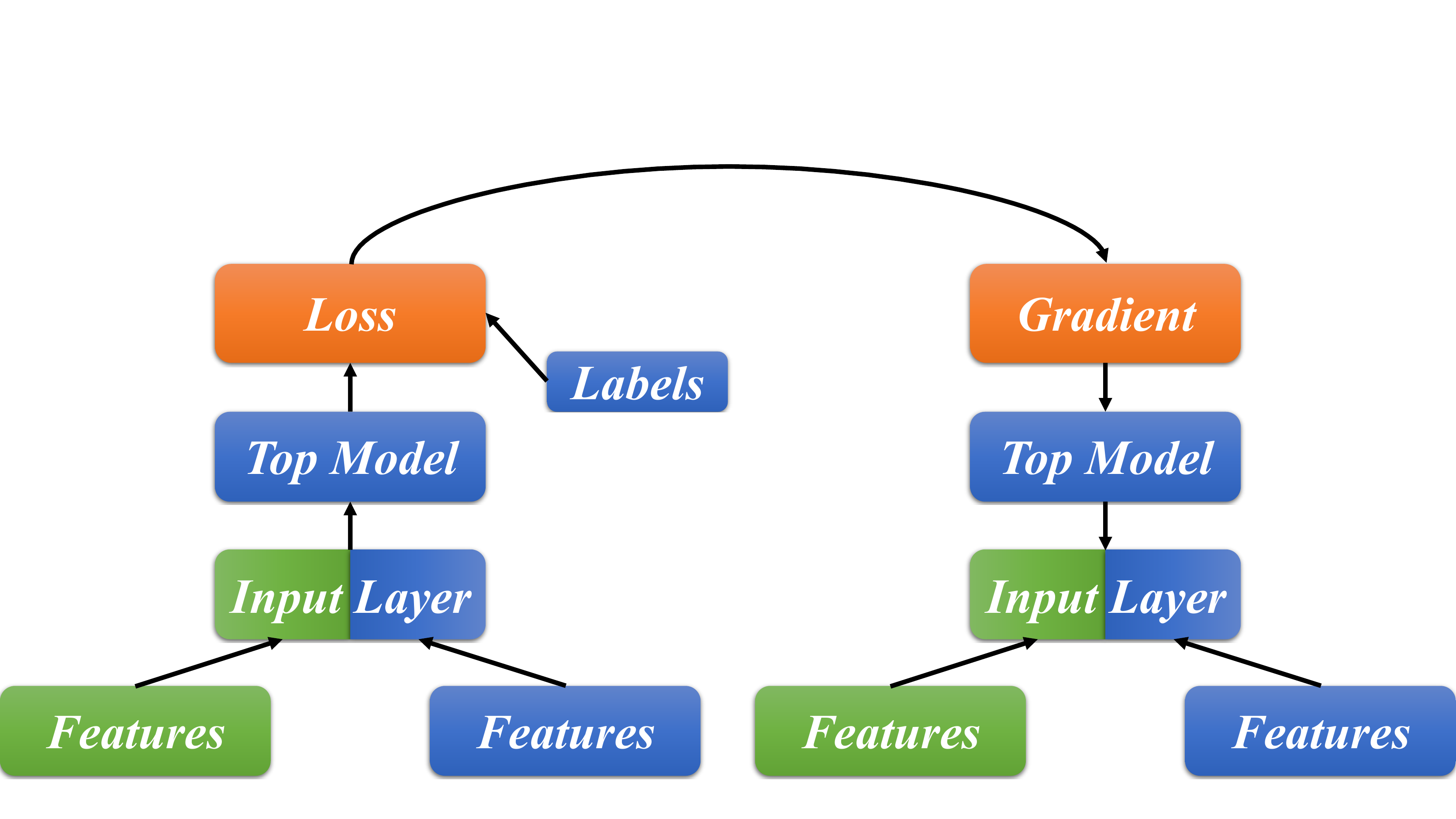}\\
 \caption{The workflow of forward pass (left) and backward pass (right) in
 a DNN training iteration of VFL. }
 \label{fig:workflow}
\end{figure}

\section{Overview}
\label{sec:overview}
In this section, we first describe our VFL framework for DNN training and introduce the threat model. We then formally prove that a feature inference attack is not possible when the attacker has zero knowledge of the data before presenting the binary feature.

\subsection{Vertical Federated Learning}
Throughout this paper, we use boldface upper case letters (e.g. $\A$) to denote matrices and boldface lower case letters (e.g. $\x$) to denote vectors. We use $\bm{0}$ to denote the zero vector. Vectors are by default column vectors while row vectors will be denoted by the transpose of column vectors (e.g. $\x^\top$). The $i$-th coordinate of vector $\x$ is denoted by $\x_i$. We use $[n]$ to represent the set $\{1, 2, \dots, n\}$ for positive integer $n$. The notation $\{0, 1\}^n$ denotes the set of all $n$-dimensional binary vectors (i.e. $\{\x\in\mathbb{R}^n: \x_i\in\{0, 1\}\text{ for all } i\in[n]\}$).

Now we will describe the overall workflow of DNN training in VFL. We consider split learning~\cite{vepakomma2018split}, where the \emph{cut layer} is the first layer, i.e., the input layer. In this setting, the parameters in the input layer are divided into two parts: those held by the passive party $A$ and the others held by the active party $B$, respectively. Moreover, the top model (i.e., from the second layer to the final layer) is owned by party $B$.

In each iteration, VFL runs a forward pass to make predictions and a backward pass to update parameters (only a forward pass exists in the inference phase). In the forward pass, party $A$ computes the output of its local model using its own data and sends the results to party $B$. Then party $B$ aggregates the first layer output and runs the top model to obtain the final output. In the backward pass, party $B$ computes the gradients using the labels and updates all the local parameters. Party $A$ receives intermediate gradients from party $B$ and computes the local model gradients.

Formally, let $d_A$ and $d_B$ denote the number of input features of party $A$ and $B$, respectively. Consider a neural network with a weight matrix $\W\in\mathbb{R}^{k\times (d_A + d_B)}$ in the input layer. Here $d_A + d_B$ is the total input dimension and $k$ is the number of neurons in the second layer. In each iteration, party $A$'s input is a vector $\x_A\in\mathbb{R}^{d_A}$ and party $B$'s input is a vector $\x_B\in\mathbb{R}^{d_B}$. The weight $\W$ is vertically partitioned into two matrices $\W_A\in\mathbb{R}^{k\times d_A}$ and $\W_B\in\mathbb{R}^{k\times d_B}$, such that party $A$ owns $\W_A$ while party $B$ owns $\W_B$. The remaining parameters, denoted by $\bm{\theta}$, are owned by party $B$.

In each iteration, party $A$ sends an intermediate result $\z_A = \W_A\x_A$ to party $B$. Then party $B$ computes $\z_B = \W_B\x_B$ and $\z = \z_A + \z_B$. After obtaining $\z$, the aggregated output of the first layer, party $B$ completes the forward pass by computing $f_{\bm{\theta}}(\z)$. Here $f_{\bm{\theta}}$ denotes the remaining forward computation, which is done only by party $B$.

In the backward pass, party $B$ uses the label $y$ to compute the gradients of loss $L$ w.r.t. $\bm{\theta}$ and $\z$. The gradient $\frac{\partial L}{\partial \W_B}$ is obtained by $\frac{\partial L}{\partial \z}\x_B^\top$. Thus all parameters maintained by party $B$ can be updated by the gradient descent method. To update $\W_A$, party $B$ passes $\frac{\partial L}{\partial \z}$ to party $A$. Party $A$ can then calculate $\frac{\partial L}{\partial \W_A} = \frac{\partial L}{\partial \z}\x_A^\top$.

The fundamental advantage of cutting at the first layer is that this causes no changes in the overall model structure, while cutting at other layers can damage some of the internal connections as shown in Figure~\ref{fig:cut}. Hence, existing model architectures can be readily applied. To see this, let $\x\in\mathbb{R}^{d_A + d_B}$ be the concatenation of $\x_A$ and $\x_B$. One can observe that $\W_A\x_A + \W_B\x_B = \W\x$, \peng{which indicates the aggregated value of the outputs of bottom models equals the first layer output in the centralized case.}

\begin{figure}
 \centering
 \includegraphics[width=\columnwidth]{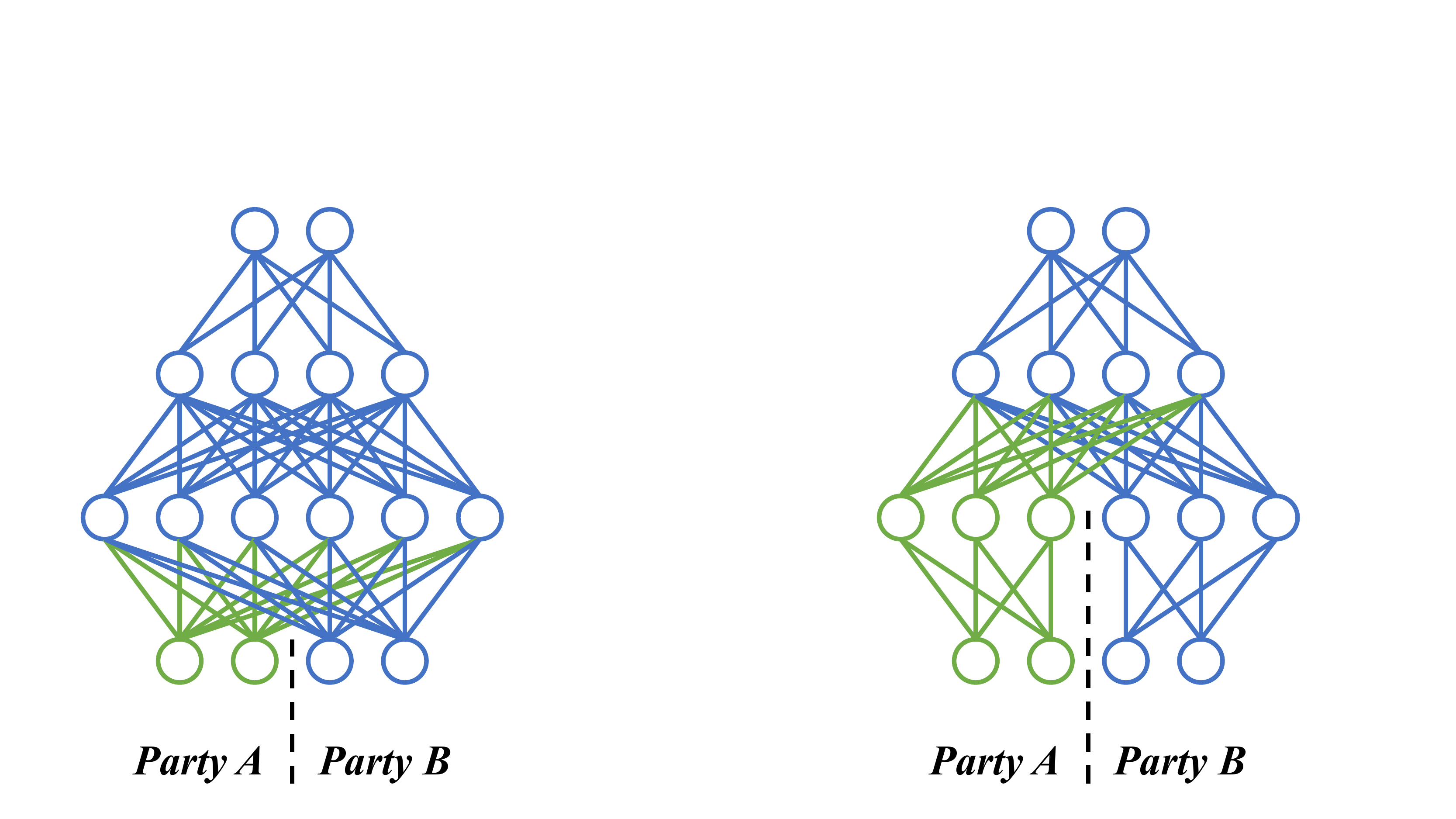}\\
 \caption{NN structure when cut layer is the input layer (left) and when cut layer is the second layer (right).}
 \label{fig:cut}
\end{figure}

\subsection{Threat Model}
Our threat model assumes that the active party is a semi-honest adversary, i.e., the adversary will strictly follow the training procedure, but tries to extract private information from its view. Specifically, the view of the adversary (i.e., the active party) includes its input data, local model parameters, and intermediate results received during the training process. However, it knows nothing about the model weights of the passive party. In this work, we consider that the goal of the adversary is to perform a data reconstruction attack in that the active party tries to reconstruct the passive party's input data.

We now formally describe the data reconstruction attack. Suppose training runs for $T$ rounds. Let $\{\x_A^t\}_{t = 1, \dots, T}$ and $\{\x_B^t\}_{t = 1, \dots, T}$ denote the input features of party $A$ and $B$ in each round. Party $B$ holds the label $\{y^t\}_{t = 1, \dots, T}$. Let $\{\W_B^t\}_{t = 1, \dots, T}$ and $\{\bm{\theta}^t\}_{t = 1, \dots, T}$ denote the parameters of party $B$'s bottom and top model in each training round. $\{\z_A^t\}_{t = 1,\dots, T}$ is the intermediate results received by party $B$. The goal of party $B$ is to reconstruct the input features of party $A$, i.e., to find an algorithm $\mathcal{A}$, so that
\[\mathcal{A}(\{\x_B^t, y^t, \W_B^t, \bm{\theta}^t, \z_A^t\}_{t = 1,\dots, T}) = \{\x_A^t\}_{t = 1, \dots, T}.\]

Extracting all the input features can be particularly challenging. In practice, it is considered to be a big threat even if one of the features could be reconstructed. For example, inferring the gender of customers definitely causes a privacy breach. Thus, it is natural to consider a relaxation of data reconstruction attack, which aims at reconstructing one of the features, i.e., find an algorithm $\mathcal{A}$, so that
\[\mathcal{A}(\{\x_B^t, y^t, \W_B^t, \bm{\theta}^t, \z_A^t\}_{t = 1,\dots, T}) = \{(\x_A^t)_i\}_{t = 1, \dots, T}\]
for some $i\in [d_A]$.

\subsection{Privacy Leakage}
Noticing what party $B$ receives is a matrix product $\z_A = \W_A\x_A$. Although no private data $\x_A$ is transmitted, it is natural to ask if party $B$ could infer $\x_A$ from the product $\z_A$. This cannot be done simply by solving the linear equations, because both $\W_A$ and $\x_A$ are unknown to party $B$. Indeed, since $\z_A$ is a matrix product of $\W_A$ and $\x_A$, there are an infinite number of possible pairs $\W_A$ and $\x_A$ that generate the same $\z_A$. Thus, party $B$'s view can have infinite number of valid inputs of party $A$, which makes it theoretically impossible to reconstruct party $A$'s input. We now state this impossibility result formally in the following theorem:
\begin{theorem}
\label{thm:impossibility}
Suppose $\{\z_A^t\}_{t = 1,\dots, T}$ is the set of intermediate results received by party $B$ during training (inference). There are infinite possible pairs of initial weight $\W_A^0$ and input data $\{\x_A^t\}_{t = 1, \dots, T}$ that can generate this set. Thus, party $B$ cannot reconstruct party $A$'s input.
\end{theorem}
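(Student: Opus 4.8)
The plan is to prove the statement non-constructively by exhibiting an explicit, infinite family of distinct pairs $(\W_A^0,\{\x_A^t\})$ that all induce exactly the same view for party $B$, i.e.\ the same observed sequence $\{\z_A^t\}$; once infinitely many inputs are shown to be observationally identical, no algorithm $\mathcal{A}$ can reliably output the true one. The first step is to note that everything on party $B$'s side is a deterministic function of $\{\z_A^t\}$: two scenarios agreeing on $\{\z_A^t\}$ produce the same aggregate $\z^t = \z_A^t + \W_B\x_B^t$, and hence, since $\bm{\theta}$, $\W_B$, $\x_B^t$, $y^t$ start identical and are updated only through quantities derived from $\z^t$, they produce the same feedback gradients $\g^t = \partial L/\partial\z^t$ at every round (a short induction). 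It therefore suffices to find many $(\W_A^0,\{\x_A^t\})$ that regenerate the given $\{\z_A^t\}$ under the fixed, known gradient sequence $\{\g^t\}$ and the published SGD update $\W_A^t = \W_A^{t-1} - \eta\,\g^t(\x_A^t)^\top$, where $\z_A^t = \W_A^{t-1}\x_A^t$.

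The key device is a linear reparametrization of party $A$'s feature coordinates. For an invertible $\bm{M}\in\mathbb{R}^{d_A\times d_A}$ I would set $\tilde\x_A^t = \bm{M}\x_A^t$ and $\tilde\W_A^0 = \W_A^0\bm{M}^{-1}$, and then prove by induction on $t$ that the resulting trajectory satisfies $\tilde\W_A^{t} = \W_A^{t}\bm{M}^{-1}$ together with $\tilde\z_A^t = \z_A^t$. The forward half is immediate, since $\tilde\W_A^{t-1}\tilde\x_A^t = \W_A^{t-1}\bm{M}^{-1}\bm{M}\x_A^t = \W_A^{t-1}\x_A^t = \z_A^t$, which both preserves the observation and, by the previous paragraph, keeps $\g^t$ unchanged.

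The hard part is the update half, where the coupling between rounds bites: the perturbed step gives $\tilde\W_A^{t} = \tilde\W_A^{t-1} - \eta\,\g^t(\tilde\x_A^t)^\top = \W_A^{t-1}\bm{M}^{-1} - \eta\,\g^t(\x_A^t)^\top\bm{M}^\top$, whereas the induction requires $\W_A^{t}\bm{M}^{-1} = \W_A^{t-1}\bm{M}^{-1} - \eta\,\g^t(\x_A^t)^\top\bm{M}^{-1}$. These agree precisely when $\bm{M}^\top = \bm{M}^{-1}$, i.e.\ when $\bm{M}$ is orthogonal. Thus during training the construction survives for every orthogonal $\bm{M}$, and since $O(d_A)$ is a continuous, infinite family whenever $d_A\ge 2$, we obtain infinitely many valid inputs as soon as some $\x_A^t\neq\bm{0}$ (distinct $\bm{M}$ send a fixed nonzero $\x_A^t$ to distinct points of a sphere, so the $\{\tilde\x_A^t\}$ are genuinely different). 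In the inference phase there is no update to commute with, the orthogonality constraint disappears, and every invertible $\bm{M}$ works, covering $d_A\ge 1$. I would close by observing that infinitely many observationally indistinguishable inputs make exact reconstruction impossible, which is the claim of Theorem~\ref{thm:impossibility}. The main subtlety to watch is exactly this cross-round coupling: a naive single-product factorization of $\z_A = \W_A\x_A$ would wrongly suggest that any invertible (or any scalar) $\bm{M}$ is admissible, and it is the SGD dynamics that restrict the transformations to orthogonal ones in the training regime.
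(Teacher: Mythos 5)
Your proposal is correct and follows essentially the same route as the paper: the paper's proof also reparametrizes party $A$'s features by a unitary (orthogonal) matrix $\U$, replacing $(\W_A^0,\{\x_A^t\})$ with $(\W_A^0\U^\top,\{\U\x_A^t\})$, and runs the same induction showing the forward outputs and SGD updates commute with this transformation. Your version is in fact slightly more careful than the paper's, since you derive \emph{why} orthogonality is forced by the update rule and flag the edge cases ($d_A\ge 2$ and some $\x_A^t\neq\bm{0}$) needed for the family to be genuinely infinite and the inputs genuinely distinct.
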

\begin{proof}
In $t$-th iteration ($1\le t\le T$), party $A$ sends $\z_A^t = \W_A^{t - 1} \x_A^t$ to party $B$. Then it receives gradient $\g^t$ (w.r.t. $\z_A^t$) from party $B$ and update the weight by $\W_A^t = \W_A^{t - 1} - \eta_t\g^t(\x_A^t)^\top$, where $\eta_t$ is the learning rate in $t$-th iteration (for inference phase, just set the learning rate to be $0$).

Suppose $\W_A^0$ and $\{\x_A^t\}_{t = 1, \dots, T}$ is a pair of initial weight and input data that generates set $\{\z_A^t\}_{t = 1,\dots, T}$. Let $\U\in\mathbb{R}^{d_A \times d_A}$ be an arbitrary unitary matrix. We prove that adopting $\W_A^0\U^\top$ and $\{\U\x_A^t\}_{t = 1, \dots, T}$ as initial weight and input data leads to the same set of intermediate output $\{\z_A^t\}_{t = 1,\dots, T}$.

Consider the first iteration, the party $A$ first computes $\W_A^0\U^\top \U\x_A^1$, which is exactly the same as $\z_A^0 = \W_A^0 \x_A^1$. Since it sends the same intermediate output to party $B$. It receives the same gradient $\g^1$. It then computes $\W_A^0\U^\top - \eta_t\g^t(\U\x_A^t)^\top = (\W_A^0 - \eta_t\g^1(\x_A^1)^\top)\U^\top = \W_A^1\U^\top$ to update its local weight.

We can then prove by induction that in $t$-th iteration, the intermediate output sent to party $B$ is exactly $\z_A^t$ and the local weight held by party $A$ is $\W_A^t\U^\top$. We have shown that this holds for $t = 1$.

Suppose this holds for iteration $1$ to $t - 1$. In $t$-th iteration party $A$ sends $\W_A^{t - 1}\U^\top \U\x_A^t = \W_A^{t - 1} \x_A^t = \z_A^t$ to party $B$. Note that party $B$ receives $\{\z_A^1, \dots, \z_A^t\}$ until iteration $t$. The gradient it sends back to party $A$ must be $\g^t$. Therefore the weight held by party $A$ will be updated to $\W_A^{t - 1}\U^\top - \eta_t\g^t(\U\x_A^t)^\top = (\W_A^{t - 1} - \eta_t\g^t(\x_A^t)^\top)\U^\top = \W_A^t\U^\top$.

Thus $\W_A^0\U^\top$ and $\{\U\x_A^t\}_{t = 1, \dots, T}$ generate the same set $\{\z_A^t\}_{t = 1,\dots, T}$. As there are infinite unitary matrices of size $d_A\times d_A$, also infinite pairs of initial weight and input data.

Since $\{\z_A^t\}_{t = 1,\dots, T}$ is the only information that party $B$ receives, an attack algorithm will always output the same for these pairs. However, $\{\U\x_A^t\}_{t = 1, \dots, T}$ varies for different $\U$. Thus, such an attack algorithm doesn't exist.
\end{proof}

\begin{remark}
In the above proof, we allow the intermediate gradient $\g^t$ to be generated arbitrarily. That is, party $B$ doesn't have to follow protocol. This is called the malicious adversary setting. A malicious adversary is more powerful than a semi-honest one at attacking. Therefore, we actually prove a stronger result - even a malicious adversary cannot reconstruct the passive party's input.
\end{remark}

\begin{remark}
We use vanilla stochastic gradient descent (SGD) in the proof of Theorem~\ref{thm:impossibility}. It can be directly extended to other popular variants such as SGD with momentum~\cite{rumelhart1986learning}, RMSprop~\cite{tieleman2012lecture}, and Adam~\cite{kingma2014adam} because the update only depends on historical gradients. This theorem indicates that one cannot distinguish between infinite possible inputs. Thus there is no way to recover the data.
\end{remark}

This illustrates that an attack is not possible when the attacker has zero knowledge about the data. In practice, however, the active party may know certain properties of the passive party's input. Noticing that the impossibility result relies on the fact that performing a unitary transform on the input doesn't change the active party's view. Thus, an intermediate result corresponds to an infinite number of possible inputs, which are indistinguishable to an attacker. However, when an attacker knows certain properties of the input features, the number of possible inputs could be drastically reduced (even to only one), making it possible for the attacker to perform attacks.

\subsection{Binary Assumption}
In this paper, we are particularly interested in the scenario where the passive party has some binary input features, i.e., features that take values only $0$ or $1$. This kind of feature is very common in real-world situations, such as clients' marital status (married and unmarried), exam results (pass or fail), and outcomes of medical tests (positive or negative). When they are converted into numerical values, the features will contain values $0$ and $1$ (we will show that our attack also works for other values as long as the number is $2$ in Section~\ref{sec:discuss}). When there are binary attributes in the raw data, it is inevitable that they will be converted into input features with two values.

The binary features may also come from feature engineering. In practice, it is common to convert a categorical feature to a one-hot representation, which introduces a large number of binary features. One-hot encoding is frequently used when the raw feature contains many categories but is nominal, i.e., there is no quantitative relationship between different values. For instance, blood types have four categories. Simply assigning them with different numbers implicitly introduces an order between them, which may hinder the model from learning the true relationship. One could apply one-hot encoding to creating dummy variables for each type to achieve better model performance.

The active party's goal is to recover those binary features, i.e., find an algorithm $\mathcal{A}$ so that
\[\mathcal{A}(\{\x_B^t, y^t, \W_B^t, \bm{\theta}^t, \z_A^t\}_{t = 1,\dots, T}) = \{(\x_A^t)_i\}_{t = 1, \dots, T},\]
where feature $i$ is binary. \peng{Although in Theorem~\ref{thm:impossibility} we demonstrate that there can be an infinite number of possible inputs that generate the same intermediate results. If we restrict the features to be binary, the number of possible inputs become quite limited, making it possible to reconstruct the data.} 

\section{Binary Feature Inference Attacks}\label{sec:attack}
In this section, we first prove that a binary feature inference attack is feasible under our assumptions, albeit with provable NP-hardness (Sec.~\ref{sec:attack_feasibility}). We then present two attack algorithms with empirical runtime results, which concretely demonstrate how to perform the attack in practice.

\subsection{Feasibility}~\label{sec:attack_feasibility}
\PHB{Solvability.} Our attack is based on a collection of intermediate results from the weight matrix $\W_A$. Note that the weight matrix remains the same in the inference phase. Formally, we consider that we have obtained $n$ intermediate results represented by a matrix $\Z_A = \X_A\W_A^\top\in\mathbb{R}^{n\times k}$, where $\X_A$ is an $n\times d_A$ matrix with each row containing one input data record. Here we transpose a data record from a column vector to a row vector for convenience. Thus, the weight matrix $\W_A$ is also transposed in the formula. We can suppose $n > d_A$ because $n$ can be arbitrarily large during inference, or one may set a batch size larger than $d_A$.

There are three key observations that are useful to our attack: (1) The number of neurons in the second layer (i.e. the first hidden layer) is larger than the number of neurons in the input layer. The hidden units play the role of capturing nonlinearities in the input data. With more neurons, the model could produce better predictions. Training a reasonably large number of neurons with regularization is common in practice~\cite{hastie2009elements}. In VFL, participants are usually big companies with computation capability, prompting them to put down more neurons for better performance. (2) The weight matrix $\W_A$ has full rank. Intuitively, the matrix is randomly initialized and it is updated by a random set of points with a specific learning rate in each iteration. It appears to be unlikely that this matrix will become singular at a time. In~\cite{martin2021implicit}, Martin \emph{et al.} empirically show that weight matrices of well-trained DNNs won't undergo rank collapse, i.e., they retain full rank during training. (3) Linear combinations of input features (i.e., columns of $\X_A$) are not binary except the input binary features. For non-binary features (i.e., features with more than two possible values), it is unlikely that their linear combinations take only two values. Even for binary features, their linear combination often contains values other than $0$ or $1$, unless the features are strongly correlated (e.g., features created by one-hot encoding, or one attribute implies another), which can still be considered as privacy leakage if it is uncovered. In Section~\ref{sec:eval}, our experimental results show that we only recover features that exactly match the inputs, further validating this observation.

The first observation indicates $k \ge d_A + d_B > d_A$. So, we have $rank(\W_A) = d_A$ by the fact that it has full rank. Thus, the intermediate outputs $\Z_A$ share the same column span with the input matrix $\X_A$. To recover an input binary feature, one could try to find a binary vector in the column span of $\Z_A$. From the third observation, it is very likely to be one actual input binary feature.

By the fact that $rank(\Z_A) \le rank(\W_A) = d_A$, we only need to consider an $n\times d$ matrix $\A$ which shares the same column space of $\Z_A$ ($\A$ can be obtained, for example, by picking $d$ linearly independent columns of $\Z_A$), where $d = rank(\Z_A)$. Finding a binary vector in the column span of $\Z_A$ is equivalent to finding it in the column span of $\A$. For simplicity, we consider the attack problem is to find a binary vector $\x$ in the column space of $\A$. Strictly speaking, we want to find a binary vector $\x$ so that there exists vector $\w\in\mathbb{R}^{d}$ that $\A\w = \x$.

\PHM{NP-hardness.} Under the binary assumption, we can formulate the attack as finding a binary vector in the column span of a given matrix based on the three observations above. However, it is still challenging. 
\peng{As we will show that the decision version of the attack problem is NP-hard by using a reduction from the Exact Cover problem, which is known to be NP-hard. We formalize the reduction in the following Theorem~\ref{thm:nphardness}.}

\begin{theorem}
\label{thm:nphardness}
Given a matrix $\A$, deciding whether there is a nonzero binary vector $\x$ so that there exists $\w$ that $\A\w = \x$ is NP-hard.
\end{theorem}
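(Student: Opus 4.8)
The plan is to establish NP-hardness by a polynomial-time reduction from Exact Cover. Recall an instance of Exact Cover: a universe $U = [m]$ and sets $S_1, \dots, S_n \subseteq U$, asking whether some subfamily partitions $U$ (covers every element exactly once). I would encode this by its incidence matrix $\bm{M} \in \{0,1\}^{m \times n}$ with $\bm{M}_{ij} = 1$ iff $i \in S_j$, so that a subfamily is represented by its indicator $\w \in \{0,1\}^n$ and ``exact cover'' becomes the system $\bm{M}\w = \bm{1}$ together with $\w \in \{0,1\}^n$. The goal is to build a matrix $\A$ whose column span contains a nonzero binary vector \emph{if and only if} the instance admits an exact cover; the attacker's coefficient vector will play the role of the subfamily indicator.

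The first design step is to force the coefficients to be binary. Stacking an identity block on top of $\bm{M}$ already guarantees that any coefficient vector producing a binary image must itself lie in $\{0,1\}^n$, so the $\bm{M}$-image records the coverage multiplicities $c_i = \sum_{j : i \in S_j} \w_j$. \textbf{The main obstacle} is that binariness of these multiplicities only forces $c_i \in \{0,1\}$, i.e. a \emph{packing} (each element covered at most once), and a nonempty packing always exists trivially (any single set), so this crude construction would answer ``yes'' regardless of whether an exact cover exists. The crux of the reduction is therefore to rule out under-coverage, i.e. to force every $c_i$ to equal exactly $1$ while suppressing the trivial solutions.

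To overcome this I would homogenize with one auxiliary coordinate $t$ and add gadget rows coupling $t$ to the selection, taking
\[
\A = \begin{pmatrix} \I & \bm{0} \\ \bm{0}^\top & 1 \\ -\I & \bm{1} \\ \bm{M} & \bm{0} \\ \bm{M} & -\bm{1} \end{pmatrix},
\]
where the coefficient vector is $(\w, t)$. The second and third blocks pin $t \in \{0,1\}$ and enforce $t - \w_j \in \{0,1\}$, which forbids the combination $\w_j = 1$ while $t = 0$; hence any nonzero binary image must have $t = 1$, eliminating the trivial packing solutions at $t = 0$ (where, since $\w = \bm{0}$ is then forced, the whole image collapses to $\bm{0}$). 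The last two blocks impose $c_i \in \{0,1\}$ and $c_i - t \in \{0,1\}$ simultaneously, which at $t = 1$ intersect to $c_i = 1$ for every $i$, exactly the exact-cover condition.

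Finally I would verify both directions. Given an exact cover with indicator $\w^\star$, the vector $(\w^\star, 1)$ maps to a nonzero binary image, so a solution exists; conversely, any nonzero binary image forces, through the case analysis $t \in \{0,1\}$ above, $t = 1$ and $\bm{M}\w = \bm{1}$ with $\w \in \{0,1\}^n$, recovering an exact cover. Since $\A$ has dimensions and entries polynomial in $m$ and $n$, the reduction is polynomial-time, and NP-hardness of the attack decision problem follows. I expect the only delicate part to be the case analysis showing that the coupling rows genuinely \emph{force} $t = 1$ (and thus full coverage) rather than merely permitting it, which is precisely what rules out the otherwise-ubiquitous spurious packing solutions.
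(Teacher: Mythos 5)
Your reduction is correct, and while it shares the paper's skeleton, the key gadget is genuinely different. Like the paper, you reduce from Exact Cover, stack an identity block so that any coefficient vector with a binary image must itself be binary, and introduce a homogenizing coordinate $t$ that must equal $1$ in any nonzero solution. Where you diverge is in how ``covered at most once'' gets upgraded to ``covered exactly once.'' The paper appends a single counting row $[2|S_1|, \dots, 2|S_m|, -2n]$: its image is an even integer, so binariness forces it to be $0$, pinning $\sum_{S_j \in C'} |S_j| = n$; combined with the per-element constraint $c_i - 1 \in \{0,1\}$, this \emph{global} cardinality identity forces every $c_i = 1$. You instead duplicate the incidence block, once unshifted and once shifted by $-t\bm{1}$, so that the two \emph{local} constraints $c_i \in \{0,1\}$ and $c_i - t \in \{0,1\}$ intersect to $c_i = 1$ directly when $t = 1$, and you force $t = 1$ with the explicit $[-\I \mid \bm{1}]$ block rather than through the counting row. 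Your matrix has roughly twice as many rows, but the argument is purely local --- no parity or counting step --- and it also sidesteps a small edge case in the paper's claim that $\w_{m+1} = 0$ forces $\w = \bm{0}$, which silently assumes no $S_j$ is empty (with an empty set present, the paper's instance would answer ``yes'' regardless; this is harmless since empty sets can be removed, but your construction needs no such normalization). Both reductions are polynomial, and both directions of your equivalence check out.
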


\begin{proof}
 We start with a quick primer on the exact cover problem. Given a set of $n$ elements $U = \{u_1, \dots, u_n\}$ and a collection $C = \{S_1, \dots, S_m\}$ of subsets of $U$. The \textit{exact cover problem} is to decide whether there is a sub-collection $C'\subseteq C$ that covers every element exactly once, i.e. $|\{j|u_i\in S_j\text{ and }S_j\in C'\}| = 1$ for all $i\in [n]$. Given this NP-hard problem, we are now ready to prove that the decision version of the attack problem is also NP-hard.

 Consider an instance of Exact Cover: $U = \{u_1, \dots, u_n\}$ and $C = \{S_1, \dots, S_m\}$.

 We construct three matrices. Let $\A_1 = \I_{m + 1}$ be an $(m +1)$-dimensional identity matrix, $\A_2 = (a_{ij})\in\mathbb{R}^{n\times (m + 1)}$ where
 \[a_{ij} = \begin{cases}
 1 & \text{if }j\in [m]\text{ and }u_i\in S_j \\
 0 & \text{if }j\in [m]\text{ and }u_i\notin S_j \\
 -1 & \text{if }j = m + 1
 \end{cases},\]
 $\A_3 = [2|S_1|, \dots, 2|S_m|, -2n]\in\mathbb{R}^{1\times (m + 1)}$. Stacking the three matrices we can obtain
 \[\A = \begin{bmatrix}
 \A_1 \\
 \A_2 \\
 \A_3
 \end{bmatrix} \in \mathbb{R}^{(n + m + 2)\times (m + 1)}.\]

 We then show if we could decide whether there exists $\w\in\mathbb{R}^{m + 1}$ that $\x = \A\w$ is nonzero and binary, we can decide whether there exists an exact cover.

 Suppose there is a sub-collection $C'$ that covers every element exactly once. Let $\w_{m + 1} = 1$ and for $j\in[m]$ let
 \[\w_j = \begin{cases}
 1 & \text{if } S_j\in C' \\
 0 & \text{if } S_j\notin C'
 \end{cases}.\]
 Then $\A_1 \w =\w$ is nonzero and binary. The $i$-th element of $\A_2\w$ is $\sum_{j=1}^m a_{ij}\w_j - 1 = |\{j | u_i\in S_j \text{ and }S_j\in C'\}| - 1 = 0$. And $\A_3\w = 2 (\sum_{S_j\in C'}|S_j| - n) = 0$. Therefore $\A\w$ is nonzero and binary.

 Now, suppose there exists $\w\in\mathbb{R}^{m + 1}$ that $\x = \A\w$ is nonzero and binary. From $\A_1\w = \w$, we know that $\w$ is also binary. Since $\A_3 \w = 2\sum_{j = 1}^{m}|S_j|\w_j - 2n\w_{m + 1}$ is a multiple of $2$, it must be $0$. Thus $\w_{m + 1}$ must be $1$ otherwise we will have $\w = \bm{0}$, contradicting that $\A\w$ is nonzero. So if we let $C' = \{S_j | \w_j = 1, j\in[m]\}$, we have $\sum_{S_j \in C'} |S_j| = \sum_{j = 1}^{m}|S_j|\w_j = n\w_{m + 1} = n$. The $i$-th element of $\A_2\w$ is $\sum_{j=1}^m a_{ij}\w_j - 1 = |\{j | u_i\in S_j \text{ and }S_j\in C'\}| - 1$, which is either $0$ or $1$, indicating that $|\{j | u_i\in S_j \text{ and }S_j\in C'\}|$ should be $1$ or $2$ for each $i\in[n]$. But we have $\sum_{i=1}^{n}|\{j | u_i\in S_j \text{ and }S_j\in C'\}| = \sum_{S_j\in C'}|S_j| = n$. Thus $|\{j | u_i\in S_j \text{ and }S_j\in C'\}|$ must be 1, namely, each element is covered exactly once. Therefore $C'$ is an exact cover.
\end{proof}

\begin{remark}
 In Theorem~\ref{thm:nphardness} we add one restriction that $\x$ should be a nonzero vector, i.e. $\x\neq\bm{0}$. Since $\A\bm{0} = \bm{0}$, the zero vector $\bm{0}$ is always a trivial solution. Thus, it reveals no information about the actual input. Also, an input feature that contains only $0$ contributes nothing to the training process and is impossible to be detected by the attacker.
\end{remark}

\PHM{Summary.} While it is computationally prohibitive for an attacker to recover the passive party's binary feature vector when the feature dimension grows exponentially large, In practice, however, an exponential-time complexity algorithm can still be effective to perform the attack within a reasonable time. Moreover, the attack is conducted offline as the active party only needs to collect the intermediate results once. We next present two attack algorithms.

\subsection{Strawman Attack: Solving by Linear Equations}~\label{sec:attack_strawman}
\PHB{Technical intuition.}
Recall that the problem now is to find a binary vector $\x$ so that the linear equations $\A\w = \x$ have a solution. One direct approach is to try all possible $\x$'s and check if such $\w$ exists. This requires solving the linear equations $\Theta(2^n)$ times, which is not acceptable since $n$ can be very large. Noticing that the rank of $\A$ is only $d$, thus $d$ linear equations are sufficient to derive a solution. Therefore, we can solve the problem on a $d\times d$ submatrix of $\A$ and check the solutions on the original matrix $\A$, reducing the number of enumerations from $\Theta(2^n)$ to $\Theta(2^d)$.

\begin{algorithm}
\caption{Attack by Solving Linear Equations}
\label{alg:attack_equ}
\begin{algorithmic}[1]
 \REQUIRE Matrix $\A\in\mathbb{R}^{n\times d}$ with $rank(\A) = d$
 \STATE Find a $d\times d$ submatrix $\A'$ from $\A$, where $rank(\A') = d$
 \FOR{$\x'$ in $\{0,1\}^{d} \setminus \{\bm{0}\}$}
 \STATE $\w \gets \A'^{-1}\x'$ \label{alg:attack_equ_1}
 \STATE $\x\gets \A\w$ \label{alg:attack_equ_2}
 \IF{$\x$ is binary}
 \RETURN $\x$
 \ENDIF
 \ENDFOR
 \ENSURE Vector $\x\in\mathbb{R}^n$
\end{algorithmic}
\end{algorithm}

\PHB{Correctness.}
To verify the correctness, we have to ensure that if $\A$ does contain some binary vectors in its column space, Algorithm~\ref{alg:attack_equ} will always find one of them. This is implied by the fact that picking $d$ linearly independent rows from $\A\w = \x$ preserves the uniqueness of $\w$ and the right-hand side remains binary. Therefore $\w$ will be obtained by solving the reduced problem and $\x$ will be output when we return to the original one. We formally state the correctness guarantee in Theorem~\ref{thm:attack_equ} and its proof.

\begin{theorem}
Suppose $\A$ is an $n\times d$ matrix with rank $d$. If there exists a vector $\w'$ such that $\A\w'$ is nonzero and binary, then Algorithm~\ref{alg:attack_equ} outputs a nonzero and binary vector $\x=\A\w$ for some $\w$. Moreover, Algorithm~\ref{alg:attack_equ} runs in time $O(nd\cdot 2^{d})$.
\label{thm:attack_equ}
\end{theorem}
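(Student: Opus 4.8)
The plan is to establish two separate claims: first, the correctness guarantee that the algorithm returns a nonzero binary vector whenever one exists in the column space of $\A$; second, the runtime bound $O(nd\cdot 2^d)$. I would handle correctness first, since it is the substantive part.

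For correctness, suppose some $\w'$ yields a nonzero binary $\A\w'$. Let $\A'$ be the $d\times d$ full-rank submatrix chosen in line 1, obtained by selecting $d$ linearly independent rows of $\A$; denote by $\x' = \A'\w'$ the restriction of $\A\w'$ to those same $d$ coordinates. The key observation is that $\x'$ is itself a nonzero-or-zero binary vector in $\{0,1\}^d$, being a sub-vector of the binary vector $\A\w'$. The crucial step is to argue that when the loop reaches this particular $\x'$ (assuming $\x' \neq \bm{0}$; the edge case $\x'=\bm{0}$ I address below), line~\ref{alg:attack_equ_1} recovers exactly $\w = \A'^{-1}\x' = \w'$, because $\A'$ is invertible and $\A'\w' = \x'$ has a unique solution. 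Then line~\ref{alg:attack_equ_2} computes $\A\w = \A\w'$, which is binary by hypothesis, so the algorithm returns it. I would emphasize that the algorithm iterates over \emph{all} of $\{0,1\}^d\setminus\{\bm{0}\}$, so it is guaranteed to encounter $\x'$ at some point; it may return an even earlier binary vector, but any returned vector $\A\w$ is binary (checked explicitly) and nonzero (since $\x' = \A'\w \neq \bm{0}$ on the chosen rows forces $\A\w \neq \bm{0}$), which still satisfies the theorem's conclusion.

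The main obstacle is the edge case where the restriction $\x' = \A'\w'$ to the chosen rows is the zero vector even though $\A\w'$ is nonzero overall. In that situation the loop never tests $\x'=\bm{0}$, so I must show the algorithm still succeeds by some other iterate. The cleanest resolution is to note that $\A'\w' = \bm{0}$ with $\A'$ invertible forces $\w' = \bm{0}$, hence $\A\w' = \bm{0}$, contradicting nonzeroness. Thus $\x'$ is automatically a nonzero binary vector and lies in the search space, closing the gap. I would state this explicitly to make the argument airtight.

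For the runtime, the analysis is routine: line~1 finds a full-rank $d\times d$ submatrix by Gaussian elimination in $O(nd^2)$ (or one may fold this into the loop cost). The loop runs $2^d - 1$ times; each iteration performs a matrix-vector solve $\A'^{-1}\x'$ and a matrix-vector product $\A\w$, costing $O(d^2)$ and $O(nd)$ respectively, with the binary check costing $O(n)$. Since $n \ge d$, the dominant per-iteration cost is $O(nd)$, giving a total of $O(nd\cdot 2^d)$, which absorbs the preprocessing term. I would precompute $\A'^{-1}$ once outside the loop so that each solve is a single $O(d^2)$ multiplication rather than a fresh elimination.
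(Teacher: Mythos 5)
Your proposal is correct and follows essentially the same route as the paper's proof: restrict to the full-rank $d\times d$ submatrix, observe that $\x'=\A'\w'$ is binary and (by invertibility of $\A'$) nonzero, so the enumeration must hit it and output a valid vector, with the same $O(nd\cdot 2^d)$ accounting. Your explicit handling of the $\x'=\bm{0}$ edge case is just a contrapositive restatement of the paper's argument that $\w'\neq\bm{0}$ and $\A'$ full-rank force $\A'\w'\neq\bm{0}$, so there is no substantive difference.
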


\begin{proof}
If a vector is returned by the algorithm, it must be a valid solution. thus, we only need to show that it always output some vector(s). Since $rank(\A) = d$, Algorithm~\ref{alg:attack_equ} can find a $d\times d$ submatrix $\A'$ with full rank. Clearly, $\A'\w'$ is binary because $\A\w'$ is binary. Note that $\w'$ is nonzero because $\A\w'$ is nonzero, thus $\A'\w'$ is also nonzero. Otherwise, we will have $rank(\A') < d$, contradicting that $\A'$ has full rank. When the algorithm processes with $\x' = \A'\w'$, $\x = \A\w'$ will be generated and output.

The algorithm enumerates all $2^{d}$ possible $d$-dimensional binary vectors. For each vector it computes two matrix multiplications, which cost $O(nd)$ time. Thus the total time complexity is $O(nd\cdot 2^{d})$.
\end{proof}

\PHM{Time complexity.} The computation complexity of Algorithm~\ref{alg:attack_equ} is $O(nd\cdot 2^{d})$\peng{, which becomes $O(nd_A\cdot 2^{d_A})$ when performing on the intermediate results $\Z_A$}. To empirically examine the feasibility of the attack, we conducted a testbed evaluation. Specifically, we implement it using C++, one of the most efficient languages with Eigen library\cite{eigenweb} for matrix operations. We then deploy it atop an Amazon EC2 c5.4xlarge instance (16 GiB Memory and 32 vCPU), a middle-tier compute-optimized commercial virtual machine.

\begin{figure}[t]
 \centering
 \begin{subfigure}[b]{0.45\columnwidth}
 \centering
 \includegraphics[width=\columnwidth]{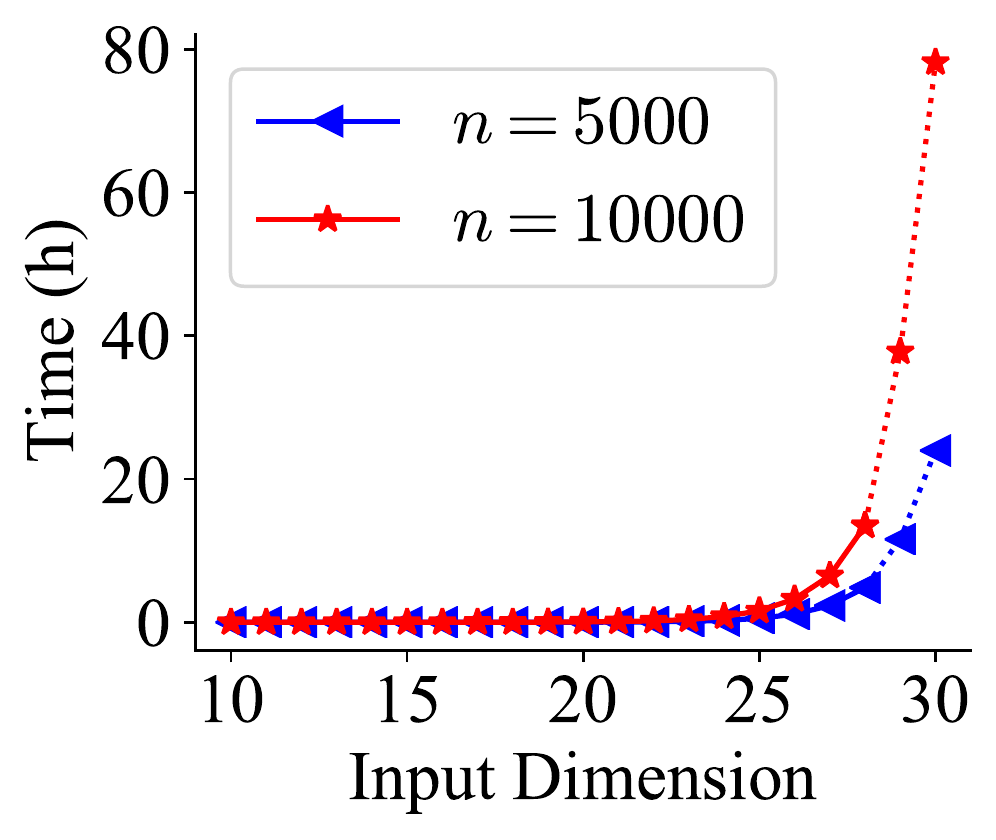}
 \caption{Algorithm 1}
 \label{fig:runtime_equ}
 \end{subfigure}
 \begin{subfigure}[b]{0.45\columnwidth}
 \centering
 \includegraphics[width=\columnwidth]{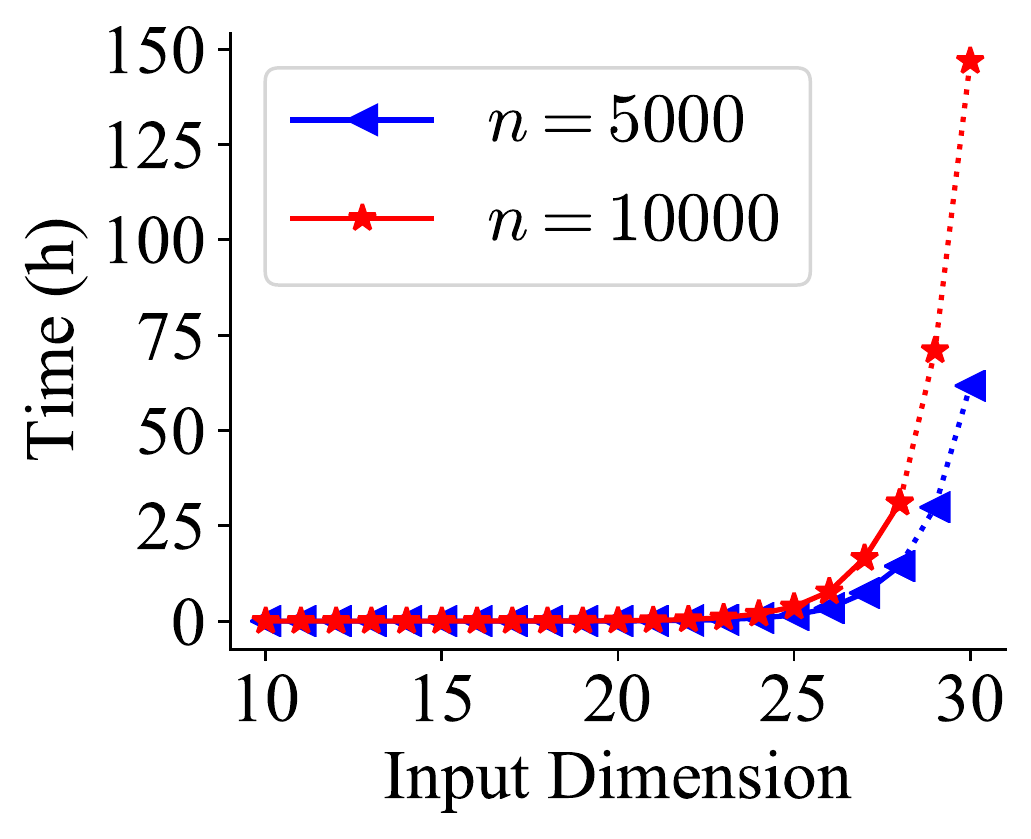}
 \caption{Algorithm 2}
 \label{fig:runtime_reg}
 \end{subfigure}
 \caption{Attack runtime for different $d_A$.}
 \label{fig:runtime}
 \end{figure}

Since the runtime of Algorithm~\ref{alg:attack_equ} only depends on the input size (i.e. number of data records $n$ and input dimension of passive party $d_A$), we run it on randomly generated input data to test its efficiency. In our experiments we try $n\in\{5000, 10000\}$. For $d_A$ from $10$ to $28$ we report the real runtime and for $29$ and $30$ the data is approximated by previous points using function $f(d_A) = C\cdot 2^{d_A}\cdot d_A$. We plot the results in Figure~\ref{fig:runtime_equ}.

From the results, we can see the runtime grows nearly linearly with $n$ and exponentially with $d_A$. In practice, this attack is performed offline as the attacker can record the intermediate output. Thus, there is no strict constraint on time - it can run this attack algorithm for weeks or months to obtain the results. Also, in real-world applications in VFL, the attacker is a company with ample computation capability. The attack algorithm can be easily parallelized and run much faster on GPUs, thus, the training framework can still be vulnerable even if the input dimension is much larger than that used in our experiments.

\subsection{Attack: Solving Linear Regression}\label{sec:attack_robust}
\PHB{Technical intuition.} Algorithm~\ref{alg:attack_equ} performs the attack via solving linear equations, which may not be feasible in practice because the equality constraints can be easily broken by numerical errors during calculations, or random noise proactively added by the passive party.

To sidestep the drawbacks of solving linear equations, instead of finding a binary vector in the column span of matrix $\A$, we can target to find a binary vector that is close to the column space. To be exact, instead of solving \[\{\x\in\{0, 1\}^n\setminus\{\bm{0}\}: \A\w = \x \text{ for some }\w\}\] by Algorithm~\ref{alg:attack_equ}, one can obtain $\x$ by computing

\begin{equation*}
 \argmin_{\x\in\{0, 1\}^n\setminus\{\bm{0}\}}\min_{\w}\Vert \A\w - \x\Vert_2^2,
\end{equation*} which minimizes the Euclidean distance from the column space of $\A$ to $\x$.

However, solving the above problem by enumeration requires solving the linear regression problem in $\Omega(2^n)$ times, where the runtime could be prohibitively high. We use a similar idea as in Algorithm~\ref{alg:attack_equ}: solve the problem on a submatrix, then verify the solutions on the whole matrix. The challenge is how to select a submatrix, whose solutions deviate from the original matrix with a bounded error. We adopt the Leverage Score Sampling technique~\cite{Mahoney2011Randomized} to sample the submatrix. The details are presented in Algorithm~\ref{alg:attack_reg}.

\begin{algorithm}
\caption{Attack by solving Linear Regression}
\label{alg:attack_reg}
\begin{algorithmic}[1]
 \REQUIRE Matrix $\A\in\mathbb{R}^{n\times d}$ with $rank(\A) = d$
 \STATE Use leverage score sampling to randomly sample and rescale $r$ rows and obtain $\A' = \D\S\A$, where $\S\in\mathbb{R}^{r\times n}$ is a sampling matrix that samples $r$ rows $R = \{i_1, \dots, i_r\}$ and $\D\in\mathbb{R}^{r\times r}$ is a diagonal matrix that rescales the values in each row
 \STATE $T\gets \{\e\}$ where $\e = [1, 0, \dots, 0]^\top\in\mathbb{R}^n$
 \FOR{$\x'$ in $\{0,1\}^r \setminus \{\bm{0}\}$}
 \STATE $\w' \gets \argmin_{\w}\Vert \A'\w - \D\x'\Vert_2^2$
 \STATE Create an $n$-dimensional vector $\x$, set \\
 $\x_i \gets \begin{cases}\x'_j & \text{if }i=i_j \text{for some }j\\ 0 & \text{if } i \notin R\text{ and }(\A\w')_i < 0.5\\ 1 & \text{otherwise}\end{cases}$
 \STATE $T\gets T\cup\{\x\}$
 \ENDFOR
 \STATE $\x^* \gets \argmin_{\x\in T}\min_{\w}\Vert \A\w - \x\Vert_2^2$
 \ENSURE Vector $\x^*\in\mathbb{R}^n$
\end{algorithmic}
\end{algorithm}

\PHM{Correctness.} The correctness of the approximation algorithm is based on the following property of the Leverage Score Sampling technique.
\begin{lemma}[Leverage Score Sampling~\cite{Mahoney2011Randomized}]
\label{lem:leverage}
Given an $n\times d$ matrix $\A$, an $n$-dimensional vector $\b$, and $\epsilon > 0$. Let $\p_i = \Vert \U_{(i)}\Vert_2^2 / d$ be normalized leverage scores, where $\U$ is the matrix containing left singular vectors of $\A$ and $\U_{(i)}$ is the $i$-th row of $\U$. Let $\S\in\mathbb{R}^{r\times n}$ and $\D\in\mathbb{R}^{r\times r}$ be the sampling and rescaling matrix generated from distribution $\p$, where $r = O(d\log d/\epsilon^2)$. With constant probability we have
\[\Vert \A\Tilde{\x} - \b\Vert_2^2\le (1+\epsilon)\min_{\x}\Vert \A\x - \b\Vert_2^2,\]
where $\Tilde{\x} = \argmin_{\x}\Vert \D\S\A\x - \D\S\b\Vert_2^2$.
\end{lemma}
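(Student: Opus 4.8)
The plan is to follow the standard two-stage analysis of leverage-score sampling for least squares: first isolate a pair of deterministic \emph{structural conditions} on the sketch $\D\S$ that are sufficient to guarantee the $(1+\epsilon)$ approximation, and then show that these conditions hold with constant probability once $r = O(d\log d/\epsilon^2)$ rows are drawn according to $\p$. To set up, I would fix a singular value decomposition $\A = \U\bm{\Sigma}\bm{V}^\top$, let $\x^* = \argmin_{\x}\Vert\A\x - \b\Vert_2^2$, and write the optimal residual as $\b^\perp = \b - \A\x^*$, which is orthogonal to the column space of $\A$ so that $\U^\top\b^\perp = \bm{0}$ and $\mathrm{OPT} := \Vert\b^\perp\Vert_2^2 = \min_{\x}\Vert\A\x - \b\Vert_2^2$. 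The two conditions I would target are (a) a \emph{subspace embedding} bound $\sigma_{\min}^2(\D\S\U)\ge 1/\sqrt 2$, equivalently $\Vert(\D\S\U)^\top\D\S\U - \I\Vert_2\le 1 - 1/\sqrt 2$, and (b) an \emph{approximate orthogonality} bound $\Vert(\D\S\U)^\top\D\S\b^\perp\Vert_2^2\le (\epsilon/2)\,\mathrm{OPT}$.

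The deterministic step is to show that (a) and (b) imply the claim. Using $\A\tilde{\x} - \b = \A(\tilde{\x} - \x^*) - \b^\perp$ with $\b^\perp$ orthogonal to the range of $\A$, the Pythagorean theorem gives $\Vert\A\tilde{\x} - \b\Vert_2^2 = \Vert\A(\tilde{\x} - \x^*)\Vert_2^2 + \mathrm{OPT}$, so it suffices to prove $\Vert\A(\tilde{\x} - \x^*)\Vert_2^2\le \epsilon\,\mathrm{OPT}$. Writing $\A(\tilde{\x} - \x^*) = \U\z$ (so $\Vert\A(\tilde{\x}-\x^*)\Vert_2 = \Vert\z\Vert_2$ by orthonormality of $\U$) and invoking the normal equations for $\tilde{\x}$, which reduce to $(\D\S\U)^\top\D\S\U\z = (\D\S\U)^\top\D\S\b^\perp$, condition (a) lower-bounds the quadratic form by $(1/\sqrt 2)\Vert\z\Vert_2^2$ while condition (b) together with Cauchy--Schwarz upper-bounds the right-hand inner product by $\Vert\z\Vert_2\sqrt{(\epsilon/2)\,\mathrm{OPT}}$; rearranging yields $\Vert\z\Vert_2^2\le \epsilon\,\mathrm{OPT}$, as required.

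The probabilistic step is to verify that leverage-score sampling meets both conditions. Since the columns of $\U$ are orthonormal, $(\D\S\U)^\top\D\S\U$ is a sum of $r$ i.i.d.\ rank-one matrices with mean $\U^\top\U = \I$, and the choice $\p_i = \Vert\U_{(i)}\Vert_2^2/d$ caps the spectral norm of each rescaled summand at $d/r$; a matrix Chernoff/Bernstein inequality then establishes condition (a) with high probability once $r = \Omega(d\log d)$. For condition (b), because $\U^\top\b^\perp = \bm{0}$, the quantity $(\D\S\U)^\top\D\S\b^\perp$ is exactly the sampling estimator of the zero product $\U^\top\b^\perp$; the second-moment bound for approximate matrix multiplication under the leverage-score distribution gives $\mathbb{E}\Vert(\D\S\U)^\top\D\S\b^\perp\Vert_2^2\le (d/r)\,\mathrm{OPT}$, and Markov's inequality converts this into condition (b) with constant probability for $r = \Omega(d/\epsilon)$. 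A union bound over the two constant failure probabilities, together with the fact that $r = O(d\log d/\epsilon^2)$ dominates both sampling requirements for $\epsilon\le 1$, completes the argument.

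I expect the main obstacle to be the matrix concentration estimate behind condition (a): controlling $\Vert(\D\S\U)^\top\D\S\U - \I\Vert_2$ in the \emph{spectral} norm is precisely where the $\log d$ (coupon-collector-type) factor and the need for a matrix Chernoff bound enter, and it is the only genuinely non-elementary ingredient, since the remaining pieces are the Pythagorean split, the normal equations, and a first-moment/Markov argument. Because the statement is quoted from \cite{Mahoney2011Randomized}, an acceptable alternative is to cite the relative-error least-squares guarantee directly and merely reproduce the structural reduction above to keep the exposition self-contained.
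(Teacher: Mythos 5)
Your proposal is a correct outline of the standard proof, but note that the paper itself does not prove this lemma at all: it is imported verbatim from the cited survey~\cite{Mahoney2011Randomized}, and the surrounding text only paraphrases what it says before using it in the analysis of Algorithm~2. So there is no in-paper argument to compare against; the closest match to the paper's ``approach'' is the fallback you mention in your last sentence, namely citing the relative-error least-squares guarantee directly. That said, your two-stage reduction --- the structural conditions $\sigma_{\min}^2(\D\S\U)\ge 1/\sqrt{2}$ and $\Vert(\D\S\U)^\top\D\S\b^\perp\Vert_2^2\le(\epsilon/2)\,\mathrm{OPT}$, the Pythagorean split plus normal equations for the deterministic step, and matrix Chernoff plus a second-moment/Markov bound for the probabilistic step --- is exactly the argument in the source literature, and the arithmetic you sketch ($\Vert\z\Vert_2^2\le\epsilon\,\mathrm{OPT}$ from the two conditions, summand norms capped at $d/r$ under the leverage-score distribution, variance $(d/r)\,\mathrm{OPT}$ for the cross term) checks out. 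One small imprecision: the bound $\Vert(\D\S\U)^\top\D\S\U-\I\Vert_2\le 1-1/\sqrt{2}$ \emph{implies} $\sigma_{\min}^2(\D\S\U)\ge 1/\sqrt{2}$ but is not equivalent to it; this does not affect the argument since the stronger spectral-norm condition is what the matrix Chernoff bound delivers.
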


It states that, one could sample a submatrix with an appropriate size, so that the solution obtained from solving the linear regression on this submatrix produces only a multiplicative error of $\epsilon$ on the original matrix. With that, the relative performance guarantee of Algorithm~\ref{alg:attack_reg}, which states that it can return a solution with bounded error $\epsilon$, can be ensured.

\begin{theorem}
Given an $n\times d$ matrix $\A$ and $\epsilon > 0$, let $\x^{opt} = \argmin_{\x\in \{0, 1\}^n\setminus\{\bm{0}\}} \min_{\w}\Vert \A\w - \x\Vert_2^2$. By choosing $r = O(d\log d /\epsilon^2)$, Algorithm~\ref{alg:attack_reg} outputs a vector $\x^*$ such that
\[\min_{\w}\Vert \A\w - \x^*\Vert_2^2 \le (1+\epsilon)\min_{\w}\Vert \A\w - \x^{opt}\Vert_2^2\]
with constant probability. Moreover, Algorithm~\ref{alg:attack_reg} runs in time $O(nr\cdot 2^{r})$.
\end{theorem}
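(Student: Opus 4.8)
The plan is to prove the two claims—the multiplicative approximation guarantee and the $O(nr\cdot 2^r)$ runtime—separately, with the bulk of the effort spent on the former. The central idea is to track one distinguished candidate through the enumeration loop: the binary vector $\x' = \S\x^{opt}\in\{0,1\}^r$ obtained by restricting the true optimum $\x^{opt}$ to the $r$ sampled coordinates $R=\{i_1,\dots,i_r\}$. Provided this restriction is nonzero, it is one of the vectors enumerated in the \textbf{for} loop, and I would show that the vector it contributes to $T$ already meets the target bound; since $\x^*$ is chosen to minimize the residual over all of $T$, the bound then transfers to $\x^*$.

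Concretely, when the loop processes $\x'=\S\x^{opt}$ the algorithm solves $\w'=\argmin_\w\Vert\A'\w-\D\x'\Vert_2^2$. Because $\A'=\D\S\A$ and $\D\x'=\D\S\x^{opt}$, this is exactly the sketched regression of Lemma~\ref{lem:leverage} applied with target $\b=\x^{opt}$, so with constant probability $\Vert\A\w'-\x^{opt}\Vert_2^2\le(1+\epsilon)\min_\w\Vert\A\w-\x^{opt}\Vert_2^2$. The remaining step is to argue that the rounding used to build $\x$ from $\w'$ cannot increase the residual. On each sampled coordinate $i\in R$ the construction sets $\x_i=\x^{opt}_i$, so those terms are unchanged; on each unsampled coordinate the construction sets $\x_i$ to the value in $\{0,1\}$ nearest to $(\A\w')_i$, which minimizes $((\A\w')_i-b)^2$ over $b\in\{0,1\}$ and therefore satisfies $((\A\w')_i-\x_i)^2\le((\A\w')_i-\x^{opt}_i)^2$. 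Summing over coordinates gives $\Vert\A\w'-\x\Vert_2^2\le\Vert\A\w'-\x^{opt}\Vert_2^2$, hence $\min_\w\Vert\A\w-\x\Vert_2^2\le(1+\epsilon)\min_\w\Vert\A\w-\x^{opt}\Vert_2^2$. Since this $\x\in T$ and $\x^*$ minimizes the residual over $T$, the same bound holds for $\x^*$.

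For the runtime, I would precompute a factorization (or pseudoinverse) of $\A'$ and of $\A$ once, in time polynomial in $n$ and $d$, so that inside the loop each regression solve $\w'$, each evaluation of $\A\w'$ needed for rounding, and each final residual $\min_\w\Vert\A\w-\x\Vert_2^2$ costs $O(nd)$. The loop runs $2^r-1$ times and $d\le r$, so the whole procedure, including the final minimization over the $O(2^r)$ elements of $T$, is $O(nd\cdot 2^r)=O(nr\cdot 2^r)$; the precomputation is dominated by this since $2^r\ge d$.

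I expect the main obstacle to be the reduction-to-one-candidate argument rather than any single calculation. Two points need care: first, verifying that $\S\x^{opt}\neq\bm{0}$ so that the distinguished candidate is actually enumerated (this is also what guarantees every vector placed in $T$ is nonzero, since each carries a $1$ on some sampled row, so that $\x^*\neq\bm{0}$); and second, getting the rounding inequality exactly right, namely that coordinate-wise rounding to the nearest binary value is the step that lets us compare $\x$ against $\x^{opt}$ and conclude the residual does not grow. The probabilistic ``constant probability'' qualifier is inherited directly from Lemma~\ref{lem:leverage} and needs no extra work.
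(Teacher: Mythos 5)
Your approach is essentially the paper's: track the distinguished candidate $\x'=\S\x^{opt}$ through the loop, invoke Lemma~\ref{lem:leverage} for the sketched regression with target $\x^{opt}$, and use the coordinate-wise rounding inequality $((\A\w')_i-\x_i)^2\le((\A\w')_i-\x^{opt}_i)^2$ on unsampled rows to conclude $\Vert\A\w'-\x\Vert_2^2\le\Vert\A\w'-\x^{opt}\Vert_2^2$, after which minimality of $\x^*$ over $T$ finishes the argument. The runtime analysis also matches. However, there is one genuine gap: you frame ``$\S\x^{opt}\neq\bm{0}$'' as something to be \emph{verified}, but it is not a fact you can establish --- the $r$ sampled rows may all land on coordinates where $\x^{opt}$ is zero, in which case your distinguished candidate is never enumerated (the loop ranges over $\{0,1\}^r\setminus\{\bm{0}\}$) and your argument produces no element of $T$ with the required bound. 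This case must be handled separately, and it is exactly why the algorithm seeds $T$ with the sentinel $\e=[1,0,\dots,0]^\top$, which you mention only in connection with nonzeroness of $\x^*$.

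The paper's resolution is short: when $\S\x^{opt}=\bm{0}$, the sketched minimizer is $\w'=\bm{0}$, so Lemma~\ref{lem:leverage} gives $\Vert\x^{opt}\Vert_2^2=\Vert\A\w'-\x^{opt}\Vert_2^2\le(1+\epsilon)\min_{\w}\Vert\A\w-\x^{opt}\Vert_2^2$; since $\x^{opt}$ is a nonzero binary vector, $\Vert\e\Vert_2^2=1\le\Vert\x^{opt}\Vert_2^2$, and because $\e\in T$ one gets $\min_{\w}\Vert\A\w-\x^*\Vert_2^2\le\Vert\e\Vert_2^2\le(1+\epsilon)\min_{\w}\Vert\A\w-\x^{opt}\Vert_2^2$. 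With this second case added, your proof is complete and coincides with the paper's.
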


\begin{proof}
By Lemma~\ref{lem:leverage}, we have
\[\Vert \A\w' - \x^{opt}\Vert_2^2 \le (1+\epsilon)\min_{\w}\Vert \A\w - \x^{opt}\Vert_2^2,\]
where $\w' = \argmin_{\w}\Vert \D\S\A\w - \D\S\x^{opt}\Vert_2^2$, with constant probability.

If $\S\x^{opt}$ is nonzero, consider the vector $\x$ generated in the algorithm with $\x' = \S\x^{opt}$, it is easy to see that $\S\x = \x' = \S\x^{opt}$. Thus $\w' = \argmin_{\w}\Vert \D\S\A\w - \D\S\x\Vert_2^2$. So we have
\begin{align*}
 \Vert \A\w' - \x\Vert_2^2 &= \sum_{i\in R} [(\A\w')_i - \x_i]^2 + \sum_{i\notin R} [(\A\w')_i - \x_i]^2 \\
 &\le \sum_{i\in R} [(\A\w')_i - \x^{opt}_i]^2 + \sum_{i\notin R} [(\A\w')_i - \x^{opt}_i]^2 \\
 &= \Vert \A\w' - \x^{opt}\Vert_2^2
\end{align*}
because $[(\A\w')_i - \x_i]^2 = \min([(\A\w')_i - 0]^2, [(\A\w')_i - 1]^2)\le [(\A\w')_i - \x^{opt}_i]^2 $ for $i\notin R$.

Since $\x\in T$, by the definition of $\x^*$, we have
\begin{align*}
 \min_{\w}\Vert \A\w - \x^*\Vert_2^2 &\le \min_{\w}\Vert \A\w - \x\Vert_2^2 \\
 &\le \Vert \A\w' - \x\Vert_2^2 \\
 &\le \Vert \A\w' - \x^{opt}\Vert_2^2 \\
 &\le (1+\epsilon)\min_{\w}\Vert \A\w - \x^{opt}\Vert_2^2.
\end{align*}

If $\S\x^{opt}$ is a zero vector, $\argmin_{\w} \Vert \D\S\A\w - \D\S\x^{opt}\Vert_2^2$ is also a zero vector. Thus we have
\[\Vert \x^{opt}\Vert_2^2 \le (1 + \epsilon)\min_{\w}\Vert \A\w - \x^{opt}\Vert_2^2.\]

Since $\e\in T$ and $\x^{opt}$ is nonzero, we have
\begin{align*}
 \min_{\w}\Vert \A\w - \x^*\Vert_2^2 &\le \min_{\w}\Vert \A\w - \e\Vert_2^2 \\
 &\le \Vert \e \Vert_2^2 \\
 &\le \Vert \x^{opt} \Vert_2^2 \\
 &\le (1+\epsilon)\min_{\w}\Vert \A\w - \x^{opt}\Vert_2^2.
\end{align*}

The algorithm enumerates all binary vectors in $\mathbb{R}^{r}$ and for each vector it solves a least square problem, which can be done in $O(nr)$ time because the pseudo inverse of $\A'$ can be precomputed. Thus the total time complexity is $O(nr\cdot 2^r)$.
\end{proof}

\PHM{Time complexity.} 
The computation complexity of Algorithm~\ref{alg:attack_reg} is $O(nr\cdot 2^r)$, which depends on the choice of $r$. Our evaluation in Section~\ref{sec:eval} shows that it is sufficient to choose $r = d_A + 1$ instead of matching the theoretical bound. We follow the experimental setup of Algorithm~\ref{alg:attack_equ}, and run Algorithm~\ref{alg:attack_reg} with $r=d_A+1$.

The results are plotted in Figure~\ref{fig:runtime_reg}, where we observe a similar performance trend with Algorithm~\ref{alg:attack_equ}, though the constant is slightly inflated approximately by a factor of 2. Note that Algorithm~\ref{alg:attack_reg} is more robust than Algorithm~\ref{alg:attack_equ} in the presence of numerical perturbation, i.e., it trades time for robustness, which is beneficial for privacy preservation (as we will discuss in Sec.~\ref{sec:eval}). Moreover, Algorithm~\ref{alg:attack_reg} is also designed to run offline, similar to Algorithm~\ref{alg:attack_equ}.

\PHM{Summary.} We have devised two attack algorithms to perform binary feature inference attacks in VFL, where one has lower time complexity and the other is more robust in the presence of perturbation. Although both algorithms have an exponential time complexity, the empirical results indicate that the runtime is quite acceptable in practice unless the number of features in the passive party's data is exceedingly large.

\section{Defense}\label{sec:defense}
Our devised attacks naturally raise the question of whether the passive party's binary features can be protected. In this section, we answer this question affirmatively by presenting two effective defense mechanisms.

\subsection{Technical Intuitions}\label{sec:defense_intuition}
\PHB{Technical intuition.} Intuitively, to increase the difficulty of extracting useful information from the passive party's intermediate results, a naive yet effective approach is to (slightly) perturb the results with random noise. By adding noise, the linear equations our attack relies on no longer hold, thereby rendering the attack ineffective.

\PHM{Gaussian Noise Masking method.} Specifically, in the \textit{forward} pass, after computing the intermediate output $\z_A = \W_A\x_A$ based on its own weight, the passive party generates a random mask $\u$ that has the same size as $\z_A$. Each element of $\u$ is i.i.d. drawn from a Gaussian distribution $\mathcal{N}(0, \sigma^2)$ with zero-mean and variance $\sigma^2$.

The passive party then passes $\z_A + \u$ to the active party in the forward pass. In the respective \textit{backward} pass, the update procedure is the same as if there is no noise added to $\z_A$. That is, upon receiving $\frac{\partial L}{\partial \z}$ from the active party, the passive party will compute $\frac{\partial L}{\partial \W_A} = \frac{\partial L}{\partial \z}\x_A^\top$ and update the weight. Leveraging the idea from \textit{Gaussian Noise Masking} we formalize this in Algorithm~\ref{alg:defend_gauss}.

\begin{algorithm}
\caption{Gaussian Noise Masking Defense}
\label{alg:defend_gauss}
\begin{algorithmic}[1]
 \REQUIRE Input data $\x_A\in\mathbb{R}^{d_A}$, noise parameter $\sigma$
 \STATE $\z_A \gets \W_A\x_A$
 \STATE Sample $\u$ where $\u_i\sim\mathcal{N}(0, \sigma^2)$ for $i\in[k]$
 \STATE Send $\z_A + \u$ to the active party
 \STATE Receive $\frac{\partial L}{\partial \z}$ from the active party \vskip 2pt
 \STATE $\frac{\partial L}{\partial \W_A} \gets \frac{\partial L}{\partial \z}\x_A^\top$ \vskip 2pt
 \STATE Update $\W_A$ by gradient descent
\end{algorithmic}
\end{algorithm}

The Gaussian Noise Masking approach can thwart the strawman version of the binary feature inference attack though it relies on finding an exact solution to some linear equations. While the attack via solving linear regression is less sensitive to random noise, such a defense is still effective if the noise is sufficiently high (Section~\ref{sec:gauatk}).

\PHM{The model accuracy} Gaussian Noise Masking could compromise the training performance. To prevent binary feature inference attacks from utilizing intermediate results generated in any training round, a defense algorithm in place has to protect the entire training process. As such, even if minimum noise is added in each round, the noise accumulated during the entire training process may still be large enough to impact convergence speed or result in poor model accuracy.

\subsection{Masquerade Approach}
\PHB{Technical intuition.} The Gaussian Noise Masking approach described above essentially leads the attacker to find an incorrect solution of the target binary feature vector, which can be arbitrary and depends on the complex interplay between the intermediate results and the sampled noise. To deal with this problem, one possible improvement is to intentionally misguide the attacker to a pre-specified fabricated binary feature, instead of an arbitrary solution for ground truth. In this way, the randomness involved in perturbing the passive party's intermediate results will be significantly reduced, which consequently improves the model accuracy.

\PHM{Basic idea.} A naive approach is to simply add one fabricated binary feature to the inputs. Unfortunately, this does not work since the attacker is able to find out all the input binary features, both the true input binary features and the fabricated one. This indicates that we still need some perturbation to protect the true input features.

We restrict the rank of the weight matrix $\W_A$ to be $d_A-1$ during training. This can be viewed as introducing some perturbation since the input matrix $\X_A$ and the intermediate results $\Z_A$ no longer share the same column span. Then we can insert the fabricated binary feature by adding a mask on $\Z_A$ because $\Z_A$ is not full-rank. As a result, this effectively masquerades as the real input features.

More specifically, we can first approximate the $k\times d_A$ weight matrix $\W_A$ by a rank $d_A - 1$ matrix $\widehat{\W_A}$. Thus the intermediate output $\Z_A = \X_A\widehat{\W_A}^\top\in\mathbb{R}^{n\times k}$ has rank $d_A - 1$. Suppose we are given a binary vector $\v\in\mathbb{R}^{n}$. If we could add a mask on the intermediate result so that $\v$ becomes a vector in the column space of $\Z_A$, the attack algorithm will output $\v$ as a solution.

Towards this end, we add a mask $\R = \v\u^\top$ for some vector $\u\in\mathbb{R}^{k}$ to $\Z_A$. Suppose the attacker pick $d_A$ columns of $\Z_A + \R$ to perform the attack, it will obtain $\A + \R' = \A + \v\u'^\top$, where $\A$ is a $n\times d_A$ matrix with rank $d_A - 1$ and $\u'$ is the vector formed by selecting the corresponding $d_A$ coordinates from $\u$. Since $\A$ has rank $d_A - 1$, there exists vector $\w$ that $\A\w = \bm{0}$. Thus $(\A + \v\u'^\top)\w = \v(\u'^\top\w)$ is in the column span of $\A + \R'$, indicating that the attacker will find $\v$ as the solution by solving linear equations.

\PHM{Integration to the training workflow.}
We now show how to integrate this into the training process.

We explicitly decompose $\widehat{\W_A}$ as the product of two matrices $\P\in\mathbb{R}^{k\times (d_A - 1)}$ and $\Q\in\mathbb{R}^{(d_A - 1) \times d_A}$. Specifically, in the forward pass, the passive party computes
\[ \z_A = \big[\P|\u\big]\begin{bmatrix}\Q\x_A \\ a\end{bmatrix}\]
for an input $\x_A$, where $a$ is randomly set to be $0$ or $1$ with equal probabilities. The passive party then sends $\z_A$ to the active party. In the backward pass, the passive party receives $\frac{\partial L}{\partial \z}$ from the active party. Then it calculates

\begin{equation*}
 \frac{\partial L}{\partial \P} = \frac{\partial L}{\partial \z}\x_A^\top\Q^\top, \frac{\partial L}{\partial \u} = a\frac{\partial L}{\partial \z}, \text{and } \frac{\partial L}{\partial \Q} = \P^\top\frac{\partial L}{\partial \z}\x_A^\top,
\end{equation*} to update the parameters. We summarize this defense approach in Algorithm~\ref{alg:defend_mis}.

\begin{algorithm}
\caption{Masquerade Defense}
\label{alg:defend_mis}
\begin{algorithmic}[1]
 \REQUIRE Input data $\x_A\in\mathbb{R}^{d_A}$, random binary value $a\in\{0, 1\}$
 \STATE $\z_A \gets \big[\P|\u\big]\begin{bmatrix}\Q\x_A \\ a\end{bmatrix}$
 \STATE Send $\z_A$ to the active party
 \STATE Receive $\frac{\partial L}{\partial \z}$ from the active party \vskip 2pt
 \STATE $\frac{\partial L}{\partial \P} \gets \frac{\partial L}{\partial \z}\x_A^\top\Q^\top$ \vskip 2pt
 \STATE $\frac{\partial L}{\partial \u} \gets a\frac{\partial L}{\partial \z}$ \vskip 2pt
 \STATE $\frac{\partial L}{\partial \Q} \gets \P^\top\frac{\partial L}{\partial \z}\x_A^\top$ \vskip 2pt
 \STATE Update $\P$, $\u$, and $\Q$ by gradient descent
\end{algorithmic}
\end{algorithm}

\begin{remark}
In our description, we implicitly assume that the input matrix $\X_A$ has full rank. If it is not the case, the passive party can simply remove features that can be linearly expressed by others. \peng{Such a preprocess is very common in feature engineering. It won't degrade model performance due to the fact that neural networks are indeed based on linear combinations of features.}
\end{remark}

\section{Evaluation}
\label{sec:eval}
This section describes the experimental evaluation of the proposed attacks and countermeasures. Section~\ref{sec:dataset} describes the dataset and feature characteristics. Section~\ref{sec:setup} illustrates the experiment setup. Section~\ref{sec:absle} and Section~\ref{sec:gauatk} present the attack by solving linear equations, and the attack by solving the linear regression in the presence of Gaussian noise, respectively. Section~\ref{sec:coutermeasure} evaluates the countermeasures.

\begin{figure*}[t]
 \centering
 \begin{subfigure}[b]{\linewidth}
 \centering
 \includegraphics[width=\columnwidth]{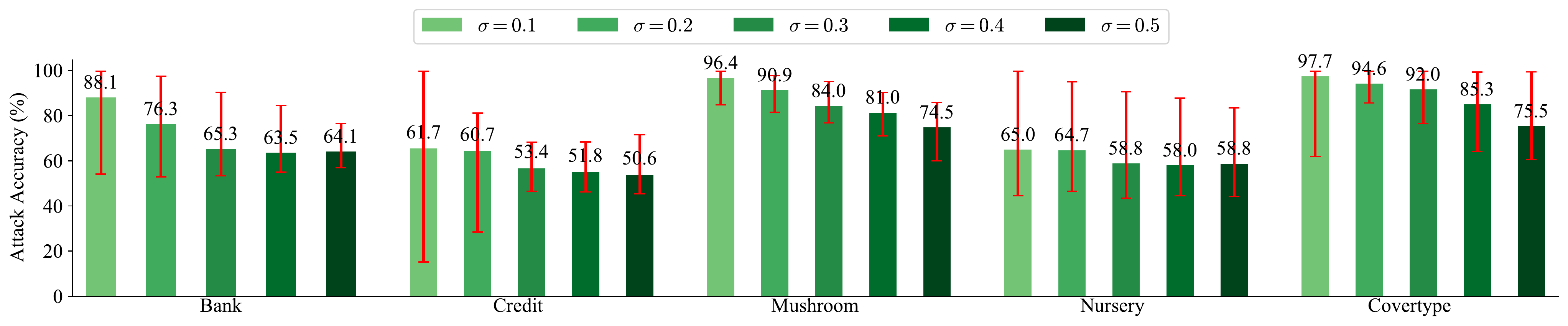}
 \caption{Attack accuracy: varying $\sigma$}
 \label{fig:test}
 \end{subfigure}
 \centering
 \begin{subfigure}[b]{\linewidth}
 \centering
 \includegraphics[width=\columnwidth]{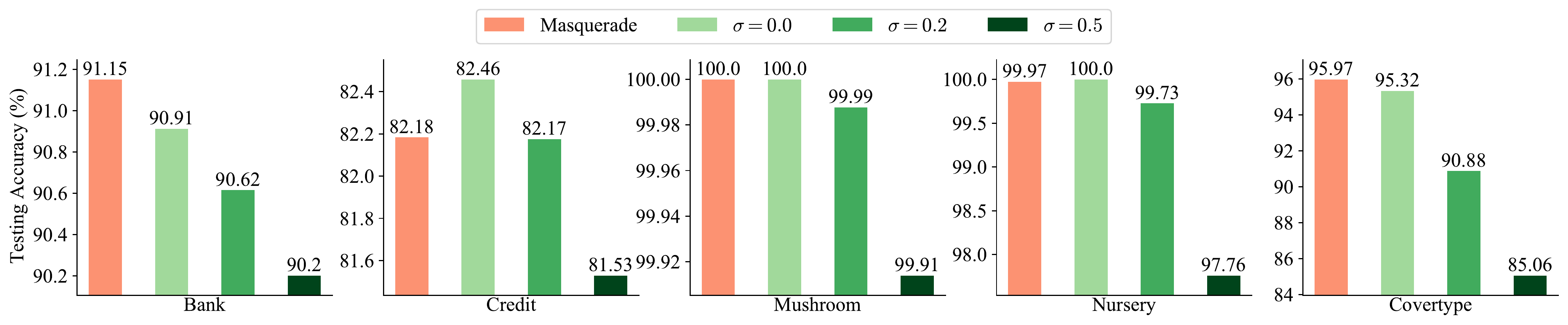}
 \caption{Model accuracy: Masquerade and varying $\sigma$}
 \label{fig:attack}
 \end{subfigure}
 \centering
 \begin{subfigure}[b]{\linewidth}
 \centering
 \includegraphics[width=\columnwidth]{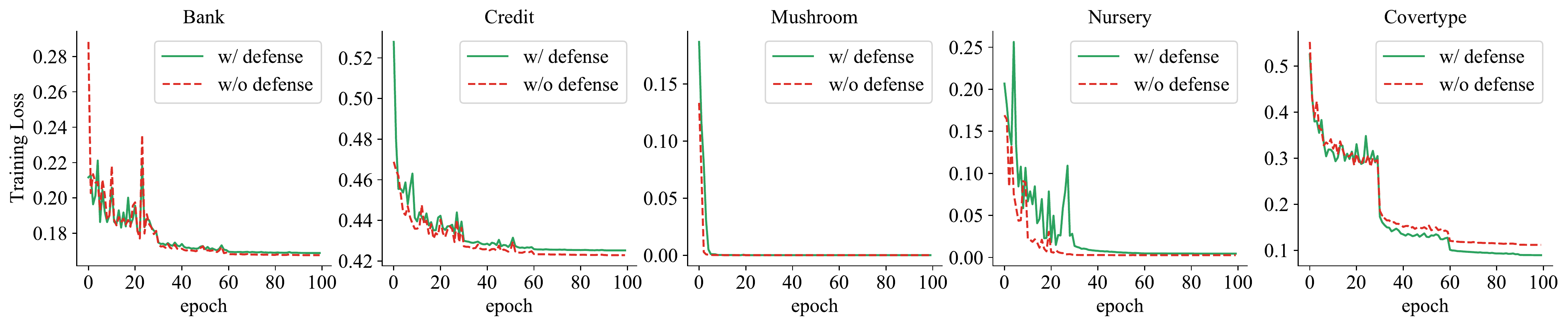}
 \caption{Training loss}
 \label{fig:loss}
 \end{subfigure}
\caption{Attack accuracy, model accuracy, and training loss.}
\label{fig:att}
\end{figure*}
\subsection{Datasets and Models}
\label{sec:dataset}

In our experiments, we use the following five public datasets from the UCI machine learning repository~\cite{Dua:2019 }.\\
\begin{itemize}
 \item Bank\footnote{\url{https://archive.ics.uci.edu/ml/datasets/bank+marketing}} is a dataset that contains information about a bank's $41188$ clients with $20$ attributes~\cite{moro2014data}. The goal is to predict whether the client will subscribe to a term deposit. We split the input features so that the passive party owns $8$ features including a binary feature ``contact''.
 \item Credit\footnote{\url{https://archive.ics.uci.edu/ml/datasets/default+of+credit+card+clients}} is a dataset that consists of $30000$ consumers' credit information where each consumer has $23$ attributes~\cite{yeh2009comparisons}. The task is to predict whether a cardholder will have a default payment. We split the features so that the passive party has $10$ input features, among which ``gender'' is a binary attribute.
 \item Mushroom\footnote{\url{https://archive.ics.uci.edu/ml/datasets/mushroom}} is a dataset that contains descriptions of $8124$ gilled mushrooms with $21$ attributes. The goal is to predict whether a mushroom is poisonous or edible. We split the features so that the passive party has $15$ features including $5$ binary features.
 \item Nursery\footnote{\url{https://archive.ics.uci.edu/ml/datasets/nursery}} dataset consists of $12960$ records of nursery-school applications, where each record contains $8$ features. The target is the final evaluation of every application, which is divided into $5$ levels. We split the features so that the passive party has $6$ features including one binary feature about the financial standing of the families.
 \item Covertype\footnote{\url{https://archive.ics.uci.edu/ml/datasets/covertype}} is a dataset of $581012$ data records with each record contains $54$ attributes extracted from the observation of a certain area. The objective is to determine the forest cover type out of total $7$ types. We split the features so that the passive party owns $10$ features, including $4$ binary features that are converted from a categorical feature by one-hot encoding.
\end{itemize}

Table~\ref{tab:dataset} summarizes the information about the datasets. We use $d_A$ to represent the number of features owned by the passive party. Different model structures (i.e. number of neurons in hidden layers) are employed for different datasets.
\begin{table}[!ht]
\caption{Description of datasets and models used in our experiments.}
\resizebox{\columnwidth}{!}{%
\begin{tabular}{|c|c|c|c|c|c|}
\hline
Dataset   & \#Instances & \#Features & \#Classes & $d_A$ & NN Structure        \\ \hline\hline
bank      & $41188$     & $20$       & $2$       & $8$   & $\{60, 30, 10\}$    \\ \hline
credit    & $30000$     & $23$       & $2$       & $10$  & $\{100, 50, 20\}$   \\ \hline
mushroom  & $8124$      & $21$       & $2$       & $15$  & $\{300, 200, 100\}$ \\ \hline
nursery   & $12960$     & $8$        & $5$       & $6$   & $\{200, 100\}$      \\ \hline
covertype & $581012$    & $54$       & $7$       & $10$  & $\{200, 200, 200\}$ \\ \hline
\end{tabular}%
}
\label{tab:dataset}
\end{table}

\subsection{Experimental Setup}
\label{sec:setup}
We implement the FL algorithm with our attack and defense methods in PyTorch~\cite{paszke2019pytorch}. The experiments were conducted on a computer equipped with Intel(R) Xeon(R) CPU @ 2.20GHz and 16GB RAM, running Ubuntu 20.04.4 LTS.

For each dataset, we randomly split it so that $90\%$ of the data records are used for training and the remaining $10\%$ are for testing. We optimize the neural network for $100$ epochs using SGD with momentum, where the momentum is set as $0.9$. The base learning rate is $0.1$ and we reduce the learning rate by a factor of $10$ after $30$, $60$, and $90$ epochs, respectively. We adopt cross entropy loss with a $10^{-4}$ weight decay as the objective function. Such a hyper-parameter setting is commonly used in training neural networks (e.g.,~\cite{yang2021simam,chen2021distilling}). Each experiment is repeated $20$ times.

\subsection{Attack by Solving Linear Equations}
\label{sec:absle}
\subsubsection{Effectiveness}
We first conduct experiments on training neural networks without adding noise. After training, we first extract intermediate results by feeding the entire dataset into the model. Then we perform Algorithm~\ref{alg:attack_equ} to recover the input binary features. Our experimental results show that we can recover the input binary features with $100\%$ accuracy.

For bank, credit, and nursery dataset, there is only one input binary feature on the passive side. In our experiments, our attack algorithm outputs the only binary feature as expected. For the mushroom dataset, which has $5$ binary features of the passive party, the attack algorithm produces all of them.

For covertype dataset, the passive party has a categorical feature, which is converted to $4$ binary features by one-hot encoding. The $4$ binary vectors don't overlap at any coordinate. Thus, in this case, the element-wise sum of any subset of the $4$ binary vectors is also a binary vector in the column space of the intermediate output. As a result, Algorithm~\ref{alg:attack_equ} will be able to identify it as well. In our experiment, the attack algorithm successfully finds the binary features along with their combinations (totally $2^4 - 1 = 15$ binary vectors, corresponding to all non-empty subsets of the $4$ binary features), from which we can reconstruct the input categorical feature.

\subsection{Attack by Solving Linear Regression in the Presence of Gaussian Noise}
\label{sec:gauatk}
\subsubsection{Effectiveness}
If the passive party adds some noise before transmitting the intermediate results, an attack by solving linear equations becomes ineffective since the linear equations no longer hold. To handle this, we propose Algorithm~\ref{alg:attack_reg}, which relies on solving linear regression.

We test this attack over five datasets with Gaussian noise added to the intermediate results. For each dataset, we vary $\sigma$ from $0.1$ to $0.5$. Similarly, we run Algorithm~\ref{alg:attack_reg} on the intermediate results of the five datasets.

To measure the effectiveness of our attack, we compared the output of our attack algorithm with the true input features. We introduce the concept of attack accuracy defined as,
\[\max_{\x\in\text{ input features}}\frac{1}{n}\sum_{i=1}^n\mathbb{I}(\x^*_i = \x_i),\]
where $x^*$ is the output of Algorithm~\ref{alg:attack_reg} and $\mathbb{I}$ is the indicator function. That is, we consider all input features and find the one that has the most coordinates that match the algorithm's output. For one-hot features, we also take their combinations into account. Clearly, a high attack accuracy indicates the effectiveness of the attack while a low attack accuracy means the protection is successful.

In the experiments we set $r = d_A +1$ in Algorithm~\ref{alg:attack_reg}. Since it is a randomized algorithm, we run it for $20$ times and choose the solution that has the minimum error. We present the range of attack accuracy under different $\sigma$ in Figure~\ref{fig:attack}. The error bars represent the maximum and minimum attack accuracy in our experiments. From the results, we can observe a descending trend of attack accuracy as we increase $\sigma$, indicating that larger noise can provide better protection against the attack.

\subsubsection{Performance}
\label{sec:performance}
When Gaussian noise is introduced, the performance of the neural network may be affected. In addition to attack accuracy, we also evaluated the effect on model accuracy when we added noise to the intermediate output. We measure the model accuracy under different levels of noise (i.e. different $\sigma$) on every dataset. The results are plotted in Figure~\ref{fig:test}, which show that the model accuracy drops as $\sigma$ increases. Therefore, although masking the intermediate output with Gaussian noise can provide a certain level of protection, it also sacrifices model quality.

\subsection{Countermeasure}
\label{sec:coutermeasure}
The masquerade defense method we propose targets to mislead the attacker to a randomly generated binary feature. Thus, the attacker will only find the fabricated binary features after performing the attack so that the true input features will be properly protected.

We evaluated our masquerade defense over the five datasets. In our experiments, we compared the fabricated feature and the solution produced by the attack algorithm. We find that it only outputs a single solution in the fabricated feature as expected. This indicates that our defense effectively misguides the attacker to the randomly generated binary feature, and therefore protects true input features.

\peng{We also wonder how this affects model performance because our defense restricts the rank of the weight matrix.} We measure the model performance and the results are shown in Figure~\ref{fig:test}, together with defense by adding Gaussian noise. Compared to the defense mechanism by adding Gaussian noise, our masquerade method has much higher model accuracy. Moreover, We plot the training loss curve with and without defense in Figure~\ref{fig:loss}. The results illustrate that the two training loss curves are close, indicating that our defense method enjoys nearly no loss in model performance.

\section{Discussion}
\label{sec:discuss}
\peng{The attack mechanisms proposed are applicable under a variety of scenarios. For instance, one might think of simply adding a bias term or representing a binary feature by other values instead of $0$ and $1$ to invalidate the attack. However, we can pick one row of $\Z_A$ and subtract this row from other rows to eliminate the bias term. Then we can apply our attack to the resulting matrix.}

Noticeably, in this work, we focus on the attack and countermeasure only for two-party VFL. However, our attack methods can be easily extended to the multi-party scenario. More specifically, if the passive parties send their intermediate results to the active party directly, we can perform the attack on the output sent from each passive party exactly the same way as in the two-party scenario. When the passive parties use secure aggregation to sum their intermediate results, our attack algorithm is still applicable, with the dimension $d_A$ replaced by the total dimension of input features owned by all passive parties. Even they adopt the protection proposed in~\cite{fu2022BlindFL}, the sum of intermediate results of all parties (including the active party) is still exposed to the active party. Our proposed attack algorithm can be applied to the sum to extract binary features as long as the total dimension is still within reach.

Moving beyond binary features, for instance categorical features, a common way to do feature engineering is one-hot encoding. In this case, our attack can find the converted binary features (and their sums), from which we can recover the categorical features. In the case that a categorical feature is transformed into a single multi-valued feature, our attack still works if the attacker knows what the values are.

A limitation is that our attack methods are based on the fact that the cut layer is the input layer. If we cut at the other layers (e.g., the second layer), our attack algorithms cannot work because the linearity the algorithms rely on no longer holds after the nonlinear activation functions are introduced.

Another limitation is that, although the masquerade defense devised can successfully mislead the attacker to a fabricated feature, it does not ensure absolute security. For instance, the attacker can adaptively change its attack method to recover input features. After attacking the fabricated binary vector, it retains all coordinates that are $0$'s and discards all $1$'s. Thus, it obtains a $(d_A - 1)$-dimensional subspace, which is still vulnerable to our attack methods. How to protect the data against any kind of attack while keeping the model performance remains open.

\section{Related Work}
\PHM{Data reconstruction attacks.} Our proposed feature attack is one type of general \emph{data reconstruction attacks}, which seek to recover the private input data. In VFL, there are two categories of data reconstruction attacks: (1) feature inference attacks, where an active party attempts to recover a passive party's input features; and (2) label inference attacks, where a passive party tries to discover the active party's labels.

\PHM{Feature attacks.} This implies an attack on a passive party's input features. Since an active party possesses far more information, thus, it is in a more advantageous position for such attacks. In~\cite{weng2021Privacy}, Weng \emph{et al.} devised a reverse multiplication attack method against logistic regression with the assistance of a corrupted third-party coordinator and a reverse sum attack method against XGBoost by encoding magic numbers in the gradients. In~\cite{luo2021Feature}, Luo \emph{et al.} designed an equality solving attack for linear regression models, a path restriction attack for decision tree models, and a generative regression network for attacking more complex models. This work adopts a white-box setting, which requires an active party to know the entire model weights including the passive party's local model. In~\cite{jiang2022Comprehensive}, Jiang \emph{et al.} proposed a gradient-based inversion attack, which can recover a passive party's input under both white-box and black-box settings with the assistance of a set of auxiliary data used in training.

\PHM{Label attacks.} In~\cite{fu2022Label}, Fu \emph{et al.} proposed a label inference attack based on the semi-supervised learning technique, which can recover an active party's labels using its local bottom model and a small set of auxiliary data. In~\cite{liu2022Batch}, Liu \emph{et al.} presented a gradient inversion attack, which can infer the labels from batch-averaged gradients when the top model is a softmax function on the sum of intermediate results and the loss function is cross entropy. In~\cite{li2022Label}, Li \emph{et al.} considered a two-party split learning scenario and designed two attack mechanisms to extract labels from the norm and direction of intermediate gradients.

\section{Conclusion}

In this paper, we take the initiative to study the feature security problem of DNN training in VFL. We first prove that feature attacks are not possible when the attacker has zero knowledge of the dataset. We then focus on client data with binary features, and show that unless the feature space is exceedingly large, we can precisely reconstruct the binary features in practice with a robust search-based attack algorithm. We proceed to present a defense mechanism that overcomes such binary feature vulnerabilities by misleading the adversary to search for fabricated features. Our experiments show that our feature reconstruction attack is extremely effective against VFL on realistic DNN training tasks. Yet, the defense method proposed can effectively thwart the attack with a negligible loss in model accuracy.



\bibliographystyle{IEEEtranS}
\bibliography{references}

\end{document}